\def\eqref#1{equation~\ref{#1}}
\def\1{\bm{1}}
\DeclareMathAlphabet{\mathsfit}{\encodingdefault}{\sfdefault}{m}{sl}
\SetMathAlphabet{\mathsfit}{bold}{\encodingdefault}{\sfdefault}{bx}{n}
\DeclareMathOperator*{\argmin}{arg\,min}
\newcommand{\wei}[1]{\textcolor{blue}{@Wei:~#1@}}
\newcommand{\method}{\textsc{GTrans}\xspace}
\newcommand{\cora}{{\small\textsf{Cora}}\xspace}
\newcommand{\citeseer}{{\small\textsf{Citeseer}}\xspace}
\newcommand{\arxiv}{{\small\textsf{OGB-Arxiv}}\xspace}
\newcommand{\amzphoto}{{\small\textsf{Amazon-Photo}}\xspace}
\newcommand{\photo}{{\small\textsf{Amz-Photo}}\xspace}
\newcommand{\elliptic}{{\small\textsf{Elliptic}}\xspace}
\newcommand{\fb}{{\small\textsf{FB-100}}\xspace}
\newcommand{\pubmed}{{\small\textsf{Pubmed}}\xspace}
\newcommand{\twitch}{{\small\textsf{Twitch-E}}\xspace}
\newtheorem{theorem}{Theorem}
\title{Empowering Graph Representation Learning with Test-Time Graph Transformation}
\author{Wei Jin$^1$\thanks{Work done while author was on internship at Snap Inc.}, Tong Zhao$^2$, Jiayuan Ding$^1$, Yozen Liu$^2$, Jiliang Tang$^1$ and Neil Shah$^2$ \\
$^1$Michigan State University \quad $^2$Snap Inc.\\
\texttt{\{jinwei2,dingjia5,tangjili\}@msu.edu,}
\texttt{\{tzhao,yliu2,nshah\}@snap.com}
}
\begin{document}

\maketitle

\begin{abstract}
As powerful tools for representation learning on graphs, graph neural networks (GNNs) have facilitated various applications from drug discovery to recommender systems. Nevertheless, the effectiveness of GNNs is immensely challenged by issues related to data quality, such as distribution shift, abnormal features and adversarial attacks. Recent efforts have been made on tackling these issues from a modeling perspective which requires  additional cost of changing model architectures or re-training model parameters. In this work, we provide a data-centric view to tackle these issues and propose a graph transformation framework named \method which adapts and refines graph data at test time to achieve better performance. We provide theoretical analysis on the design of the framework and discuss why adapting graph data works better than adapting the model. Extensive experiments have demonstrated the effectiveness of \method on three distinct scenarios for eight benchmark datasets where suboptimal data is presented. Remarkably, \method{} performs the best in most cases with improvements up to 2.8\%, 8.2\% and 3.8\% over the best baselines on three experimental settings.  
Code is released at \url{https://github.com/ChandlerBang/GTrans}.
\end{abstract}

\section{Introduction}

Graph representation learning has been at the center of various real-world applications, such as drug discovery~\citep{duvenaud2015convolutional,guo2022graph}, recommender systems~\citep{ying2018graph,fan2019graph,sankar2021graph}, forecasting \citep{tang2020knowing, derrow2021eta} and outlier detection ~\citep{zhao2021synergistic, deng2021graph}. In recent years, there has been a surge of interest in developing graph neural networks (GNNs) as powerful tools for graph representation learning~\citep{kipf2016semi,gat,hamilton2017inductive,wu2019comprehensive-survey}. Remarkably, GNNs have achieved state-of-the-art performance on numerous graph-related tasks including node classification, graph classification and link prediction~\citep{chen2021gpr,you2021graph,zhao2022learning}. 


Despite the enormous success of GNNs, recent studies have revealed that their generalization and robustness are immensely  challenged by the data quality~\citep{jin2021adversarial,li2022out}. In particular, GNNs can behave unreliably in scenarios where sub-optimal data is presented: 
\begin{compactenum}[1.]
\item 
\textit{Distribution shift}~\citep{wu2022handling,zhu2021shift}. GNNs tend to yield inferior performance when the   distributions of training and test data are not aligned (due to corruption or inconsistent collection procedure of test data).  

\item \textit{Abnormal features}~\citep{liu2021graph}.  GNNs suffer from high classification errors when data contains abnormal features, e.g., incorrect user profile information in social networks. 
\item \textit{Adversarial structure attack}~\citep{zugner2018adversarial,li2021adversarial}. GNNs are vulnerable to imperceptible perturbations on the graph structure which can lead to severe performance degradation.  
\end{compactenum}

To tackle these problems, significant efforts have been made on developing new techniques from the \emph{modeling perspective}, e.g., designing new architectures and employing adversarial training strategies~\citep{xu2019topology-attack,wu2022handling}. However,  employing these methods in practice may be infeasible, as they require additional cost of changing model architectures or re-training model parameters, especially for well-trained large-scale models. 
The problem is further exacerbated when adopting these techniques for multiple architectures. 
By contrast, this paper seeks to investigate approaches that can be readily used with a wide variety of pre-trained models and test settings for improving model generalization and robustness. Essentially, we provide a \emph{data-centric perspective} to address the aforementioned issues by modifying the graph data presented at test-time. Such modification aims to bridge the gap between training data and test data, and thus enable GNNs to achieve better generalization and robust performance on the new graph. Figure~\ref{fig:intro} visually describes this idea: we are originally given with a test graph with abnormal features where multiple GNN architectures yield poor performance; however, by transforming the graph prior to inference (at test-time), we enable these GNNs to achieve much higher accuracy. 



In this work, we aim to develop a data-centric framework that transforms the test graph to enhance model generalization and robustness, without altering the pre-trained model.
In essence, we are faced with two challenges: (1) how to model and optimize the transformed graph data, and (2) how to formulate an objective that can guide the transformation process. First, we model the graph transformation as injecting perturbation on the node features and graph structure, and optimize them alternatively via gradient descent. 
Second, inspired by the recent progress of contrastive learning, we propose a parameter-free surrogate loss which does not affect the pre-training process while effectively guiding the graph adaptation.  Our contributions can be summarized as follows: 
\begin{compactenum}[1.]
\item  For the first time, we provide a data-centric perspective to improve the generalization and robustness of GNNs with test-time graph transformation. 
\item We establish a novel framework \method for test-time graph transformation by jointly learning the features and adjacency structure to minimize a proposed surrogate loss. 
\item Our theoretical analysis provides insights on what surrogate losses we should use during test-time graph transformation and sheds light on the power of data-adaptation over model-adaptation.
\item  Extensive experimental results on three settings (distribution shift, abnormal features and adversarial structure attacks) have demonstrated the superiority of test-time graph transformation. Particularly, \method{} performs the best in most cases with improvements up to 2.8\%, 8.2\% and 3.8\% over the best baselines on three experimental settings.
\end{compactenum}
Moreover, we note: (1) \method{} is flexible and versatile. It can be equipped with any pre-trained GNNs and the outcome (the refined graph data) can be deployed with any model given its favorable transferability.  (2) \method provides a degree of interpretability, as it can show which kinds of graph modifications can help improve performance by visualizing the data.

\begin{figure}[t]
    \centering
\includegraphics[width=0.8\linewidth]{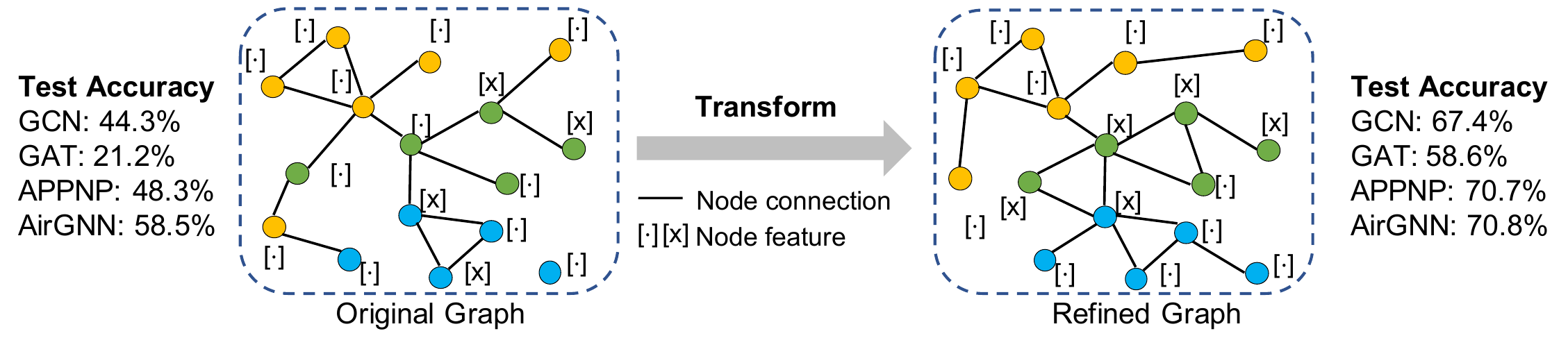}
    \vspace{-1em}
    \caption{We study the test-time graph transformation problem, which seeks to learn a refined graph such that pre-trained GNNs can perform better on the new graph compared to the original. \textbf{Shown:} An illustration of our proposed approach's empirical performance on transforming a noisy graph.}
    \label{fig:intro}
    \vspace{-1.2em}
\end{figure}

\vspace{-0.5em}
\section{Related Work}
\vspace{-0.3em}
\textbf{Distribution shift in GNNs.} GNNs have revolutionized graph representation learning and achieved state-of-the-art results on diverse graph-related tasks~\citep{kipf2016semi,gat,hamilton2017inductive,chen2021gpr,klicpera2018predict-appnp,wu2019comprehensive-survey}. However, recent studies have demonstrated that GNNs yield sub-optimal performance on out-of-distribution data for node classification~\citep{zhu2021shift,wu2022handling,liu2022confidence} and graph classification~\citep{chen2022invariance,buffelli2022sizeshiftreg,gui2022good,wu2022discovering,you2023graph}. These studies have introduced solutions to tackle distribution shifts by altering model training behavior or model architectures. 
For a thorough review, we refer the readers to a recent survey~\citep{li2022out}. Unlike  existing works, we target  modifying the inputs via test-time adaption.

\textbf{Robustness of GNNs.} Recent studies have demonstrated the vulnerability of GNNs to graph adversarial attacks~\citep{zugner2018adversarial,zugner2019adversarial,xu2019topology-attack,geisler2021robustness}, i.e., small perturbations on the input graph can mislead GNNs into making wrong predictions. Several works make efforts towards developing new GNNs or adversarial training strategies to defend against attacks~\citep{xu2019topology-attack,zhu2019robust,jin2021node,jin2021adversarial}. Instead of altering model training behavior, our work aims to modify the test graph to correct adversarial patterns.

\textbf{Graph Structure Learning \& Graph Data Augmentation.} Graph structure learning and graph data augmentation both aim to improve GNNs' generalization performance by augmenting the (training) graph data~\citep{zhao2022graph}, either learning the graph from scratch~\citep{franceschi2019learning,jin2020graph,chen2020iterative,zhao2021data} or perturbing the graph in a rule-based way~\citep{Rong2020DropEdge,feng2020graph,ding2022data}. While our work also modifies the graph data, we focus on modifying the test data and not impacting the model training process.

\textbf{Test-time Training.}
Our work is also related to test-time training~\citep{sun2020test,wang2021tent,liu2021ttt++,zhang2021memo,zhang2022test}, which has raised a surge of interest in computer vision recently. To handle out-of-distribution data, \citet{sun2020test} propose the pioneer work of test-time training (TTT) by optimizing feature extractor via an auxiliary task loss. 
However, TTT alters training to jointly optimize the supervised loss and auxiliary task loss. To remove the need for training an auxiliary task, Tent~\citep{wang2021tent} proposes to minimize the prediction entropy at test-time. Tent works by adapting the parameters in batch normalization layers, which may not always be employed by modern GNNs. In this work, we focus on a novel perspective of adapting the test graph data, which makes no assumptions about the particular training procedure or architecture.

\vspace{-0.25em}
\section{Methodology}
\vspace{-0.25em}

We start by introducing the general problem of test-time graph transformation (TTGT). While our discussion mainly focuses on the node classification task where the goal is to predict the labels of nodes in the graph, it can be easily extended to other tasks. Consider that we have a training graph $G_\text{Tr}$ and a test graph $G_\text{Te}$, and the corresponding set of node labels are denoted as $\mathcal{Y}_{\text{Tr}}$ and $\mathcal{Y}_{\text{Te}}$, respectively. Note that the node sets in $G_\text{Tr}$ and $G_\text{Te}$ can either be disjoint or overlapping, and they are not necessarily drawn from the same distribution. 
Further, let $f_\theta(\cdot)$ denote the mapping function of a GNN model parameterized by $\theta$, which maps a graph into the space of node labels. 
\newtheorem{definition}{Definition}
\begin{definition}[Test-Time Graph Transformation (TTGT)]
TTGT requires to learn a graph transformation function $g(\cdot)$ which refines the test graph $G_\text{Te}$ such that the pre-trained $f_\theta$ can yield better test performance on $g(G_\text{Te})$ than that on $G_\text{Te}$:
\begin{equation}
\begin{aligned}
    \arg \min_{g} \mathcal{L}& \left(f_{{\theta}^*}(g(G_{\text{Te}})), \mathcal{Y}_{\text{Te}}\right) \quad \text{s.t.}\quad g({G_{\text{Te}}}) \in \mathcal{P}({G_{\text{Te}}}),  \\
    &  \text{with}\;  \theta^* = \arg \min_{\theta} \mathcal{L}\left(f_{\theta}(G_{\text{Tr}}), \mathcal{Y}_{\text{Tr}}\right),
\label{eq:general}
\end{aligned}
\end{equation}
where $\mathcal{L}$ denotes the loss function measuring downstream performance; $\mathcal{P}(G_{\text{Te}})$ is the space of the modified graph, e.g., we may constrain the change on the graph to be small.
\end{definition}
To optimize the TTGT problem, we are faced with two critical challenges: (1) how to parameterize and optimize the graph transformation function $g(\cdot)$; and (2) how to formulate a surrogate loss to guide the learning process, since we do not have access to the ground-truth labels of test nodes. Therefore, we propose \method and elaborate on how it addresses these challenges as follows.

\vspace{-0.2em}
\subsection{Constructing Graph Transformation}
\vspace{-0.2em}

Let $G_\text{Te}=\{{\bf A}, {\bf X}\}$ denote the test graph, where ${\bf A}\in \{0, 1\}^{N\times N}$ is the adjacency matrix, $N$ is the number of nodes, and ${\bf X}\in{\mathbb{R}^{N\times d}}$ is the $d$-dimensional node feature matrix. 
Since the pre-trained GNN parameters are fixed at test time and we only care about the test graph, we drop the subscript in $G_\text{Te}$ and $\mathcal{Y}_\text{Te}$ to simplify notations in the rest of the paper.

\textbf{Construction.} We are interested in obtaining the transformed test graph
$G'_\text{Te}=g({\bf A}, {\bf X})=({\bf A}', {\bf X}')$. Specifically, we model feature modification as an additive function which injects perturbation to node features, i.e., ${\bf X'}={\bf X}+\Delta_{\bf X}$; we model the structure modification as ${\bf A}'={\bf A}\oplus\Delta_{{\bf A}}$\footnote{$({\bf A}\oplus\Delta_{{\bf A}})_{ij}$ can be implemented as $2-({\bf A}+\Delta_{{\bf A}})_{ij}$ if $({\bf A}+\Delta_{{\bf A}})_{ij}\geq1$, otherwise $({\bf A}+\Delta_{{\bf A}})_{ij}$.}, where $\oplus$ stands for an element-wise exclusive OR operation and $\Delta_{{\bf A}}\in \{0,1\}^{N\times N}$ is a binary matrix. In other words,  $(\Delta_{{\bf A}})_{ij}=1$ indicates an edge flip.
Formally, we seek to find $\Delta_{{\bf A}}$ and $\Delta_{{\bf X}}$ that can minimize the objective function:
\begin{equation}
    \argmin _{\Delta_{{\bf A}}, \Delta_{{\bf X}}}  \mathcal{L} \left(f_{\theta}({\bf A}\oplus\Delta_{{\bf A}}, {\bf X}+\Delta_{{\bf X}}), \mathcal{Y}\right) \quad \text{s.t.}\quad ({\bf A} \oplus \Delta_{{\bf A}}, {\bf X}+\Delta_{{\bf X}}) \in \mathcal{P}({{\bf A}, {\bf X}}),
\label{eq:general}
\end{equation}
where $\Delta_{{\bf A}}\in \{0,1\}^{N\times N}$ and $\Delta_{\bf X}\in{\mathbb{R}^{N\times d}}$ are treated as free parameters. Further, to ensure we do not heavily violate the original graph structure, we constrain the number of changed entries in the  adjacency matrix to be smaller than a budget $B$ on the graph structure, i.e., $\sum \Delta_{{\bf A}}\leq B$. We do not impose constraints on the node features to ease optimization. In this context, $\mathcal{P}$ can be viewed as constraining $\Delta_{\bf A}$ to a binary space as well as  restricting the number of changes.

\textbf{Optimization.} 
The optimization for $\Delta_{\bf X}$ is easy since the node features are continuous. The optimization for $\Delta_{\bf A}$ is particularly difficult in that (1) $\Delta_{\bf A}$  is binary and constrained; and (2) the search space of $N^2$ entries is too large especially when we are learning on large-scale graphs.

To cope with the first challenge, we relax the binary space to $[0,1]^{N\times N}$ and then we can employ projected gradient descent (PGD)~\citep{xu2019topology-attack,geisler2021robustness} to update $\Delta_{\bf A}$:
\begin{equation}
\Delta_\mathbf{A}\leftarrow{} \Pi_{\mathcal{P}}\left(\Delta_\mathbf{A}-\eta \nabla_{\Delta_{\bf A}}\mathcal{L}(\Delta_{\bf A}) \right)
\end{equation}
where we first perform gradient descent with step size $\eta$ and call a projection $\Pi_{\mathcal{P}}$ to project the variable to the space $\mathcal{P}$. Specifically, given an input vector ${\bf p}$, $\Pi_{\mathcal{P}}(\cdot)$ is expressed as: 
\begin{equation}
\Pi_{\mathcal{P}}({\bf p})\leftarrow{}\left\{\begin{array}{ll}
\Pi_{[0,1]}({\bf p}), & \text { If } \mathbf{1}^{\top} \Pi_{[0,1]}({\bf p}) \leq B; \\
\Pi_{[0,1]}({\bf p}-\gamma \mathbf{1}) \text { with } \mathbf{1}^{\top} \Pi_{[0,1]}({\bf p}-\gamma \mathbf{1})=B, & \text { otherwise, }
\end{array}\right.
\end{equation}
where $\Pi_{[0.1]}(\cdot)$ clamps the input values to $[0,1]$, $\mathbf{1}$ stands for a vector of all ones, and $\gamma$ is obtained by solving the equation $\mathbf{1}^{\top} \Pi_{[0,1]}({\bf p}-\gamma \mathbf{1})=B$ with the bisection method~\citep{liu2015sparsity}.
To keep the adjacency structure discrete and sparse, we view each entry in ${\bf A}\oplus\Delta_{\bf A}$ as a Bernoulli distribution and sample the learned graph as ${\bf A}' \sim \operatorname{Bernoulli}({\bf A}\oplus\Delta_{\bf A})$.

Furthermore, to enable efficient graph structure learning, it is desired to reduce the search space of updated adjacency matrix. One recent approach of graph adversarial attack~\citep{geisler2021robustness} proposes to sample a small block of possible entries from the adjacency matrix and update them at each iteration. This solution is still computationally intensive as it requires hundreds of steps to achieve a good performance.
Instead, we constrain the search space to only the existing edges of the graph, which is typically sparse. Empirically, we observe that this simpler strategy still learns useful structure information when combined with feature modification.

\vspace{-0.5em}
\subsection{Parameter-Free Surrogate Loss}
\vspace{-0.2em}
As discussed earlier, the proposed framework \method{} aims to improve the model generalization and robustness by learning to transform the test graph. 
Ideally, when we have test ground-truth labels, the problem can be readily solved by adapting the graph to result in the minimum cross entropy loss on test samples. However, as we do not have such information at test-time, it motivates us to investigate feasible surrogate losses to guide the graph transformation process. In the absence of labeled data, recently emerging self-supervised learning techniques~\citep{xie2021self-survey,liu2022graph} have paved the way for providing self-supervision for TTGT. However, not every surrogate self-supervised task and loss is suitable for our transformation process, as some tasks are more powerful and some are weaker. To choose a suitable surrogate loss, we provide the following theorem.

\begin{theorem}
Let $\mathcal{L}_c$ denote the classification loss and $\mathcal{L}_s$ denote the surrogate loss, respectively. Let $\rho(G)$ denote the correlation between $\nabla_{G}\mathcal{L}_c(G,\mathcal{Y})$ and $\nabla_{G}\mathcal{L}_s(G)$, and let $\epsilon$ denote the learning rate for gradient descent. Assume that $\mathcal{L}_c$ is twice-differentiable and its Hessian matrix satisfies $\|{\bf H}(G, \mathcal{Y})\|_2\leq M$  for all $G$. When $\rho(G)>0$ and $\epsilon < \frac{2\rho\left({G}\right)\left\|\nabla_{{G}} \mathcal{L}_{c}\left({G}, \mathcal{Y}\right)\right\|_{2}}{M\left\|\nabla_{{G}} \mathcal{L}_{s}\left({G}\right)\right\|_{2}}$, we have
\begin{equation}
\mathcal{L}_{c}\left({G}-\epsilon \nabla_{{G}} \mathcal{L}_{s}\left({G}\right), \mathcal{Y}\right) < \mathcal{L}_{c}\left({G}, \mathcal{Y}\right).
\end{equation}
\end{theorem}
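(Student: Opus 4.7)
The plan is a standard descent-lemma argument combined with the definition of the correlation $\rho(G)$. Since $\mathcal{L}_c$ is twice-differentiable with Hessian norm uniformly bounded by $M$, a second-order Taylor expansion of $\mathcal{L}_c$ along the update direction $-\nabla_G \mathcal{L}_s(G)$, together with the integral-form remainder (or equivalently, the standard descent lemma for $M$-Lipschitz gradients), yields
\begin{equation}
\mathcal{L}_c\bigl(G - \epsilon \nabla_G \mathcal{L}_s(G), \mathcal{Y}\bigr) \;\leq\; \mathcal{L}_c(G, \mathcal{Y}) - \epsilon \,\bigl\langle \nabla_G \mathcal{L}_c(G, \mathcal{Y}),\, \nabla_G \mathcal{L}_s(G) \bigr\rangle + \tfrac{M\epsilon^2}{2}\,\bigl\|\nabla_G \mathcal{L}_s(G)\bigr\|_2^2.
\end{equation}

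Next I would rewrite the cross term using the definition of the correlation. By construction,
\begin{equation}
\bigl\langle \nabla_G \mathcal{L}_c(G, \mathcal{Y}),\, \nabla_G \mathcal{L}_s(G) \bigr\rangle \;=\; \rho(G)\,\bigl\|\nabla_G \mathcal{L}_c(G, \mathcal{Y})\bigr\|_2 \,\bigl\|\nabla_G \mathcal{L}_s(G)\bigr\|_2,
\end{equation}
so substituting into the descent inequality gives an upper bound of the form $\mathcal{L}_c(G,\mathcal{Y}) + \epsilon\,\|\nabla_G\mathcal{L}_s\|_2\bigl(\tfrac{M\epsilon}{2}\|\nabla_G\mathcal{L}_s\|_2 - \rho(G)\|\nabla_G\mathcal{L}_c\|_2\bigr)$. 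A strict decrease therefore occurs whenever the parenthesised factor is negative, which, using $\rho(G)>0$ to divide safely, is precisely the stated step-size condition $\epsilon < 2\rho(G)\|\nabla_G\mathcal{L}_c\|_2 / (M\|\nabla_G\mathcal{L}_s\|_2)$. Under that condition the right-hand side is strictly below $\mathcal{L}_c(G,\mathcal{Y})$, yielding the claim.

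The mildly delicate step is the descent lemma itself, since $G$ is a graph rather than a plain Euclidean point. I would handle this by treating gradients with respect to the continuous parameters that actually enter the loss, namely the feature matrix $\mathbf{X}$ and the relaxed adjacency entries in $[0,1]^{N\times N}$ used in the PGD update, so that $G$ can be identified with an element of a Euclidean space and the Hessian bound $\|\mathbf{H}(G,\mathcal{Y})\|_2 \le M$ applies verbatim. Two minor edge cases are worth a line each: if $\|\nabla_G\mathcal{L}_s(G)\|_2 = 0$ the update is trivial and the conclusion is vacuous (the bound on $\epsilon$ should then be read as $+\infty$), and $\rho(G)>0$ already forces $\|\nabla_G\mathcal{L}_c\|_2>0$ so the right-hand side of the step-size bound is well-defined and positive. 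No other obstacles are anticipated; the result is essentially the classical ``any direction positively correlated with the negative gradient is a descent direction for sufficiently small step size,'' specialised to the two-loss setting and made quantitative via $M$ and $\rho(G)$.
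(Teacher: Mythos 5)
Your proposal is correct and follows essentially the same route as the paper's proof: a second-order Taylor expansion (the paper uses the Lagrange-form remainder and explicitly derives the bound $\mathbf{p}^\top \mathbf{A}\mathbf{p} \le \|\mathbf{p}\|_2^2\|\mathbf{A}\|_2$ where you invoke the standard descent lemma), followed by rewriting the cross term via the definition of $\rho(G)$ and applying the step-size condition. Your handling of strictness and of the degenerate case $\|\nabla_G\mathcal{L}_s(G)\|_2=0$ is if anything slightly cleaner than the paper's, which derives the non-strict inequality at the critical step size and then asserts strictness for smaller $\epsilon$.
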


\begin{wrapfigure}{R}{0.26\textwidth}
    \vskip -1.6em
    \begin{minipage}{\linewidth}
        \centering
        \includegraphics[width=1\linewidth]{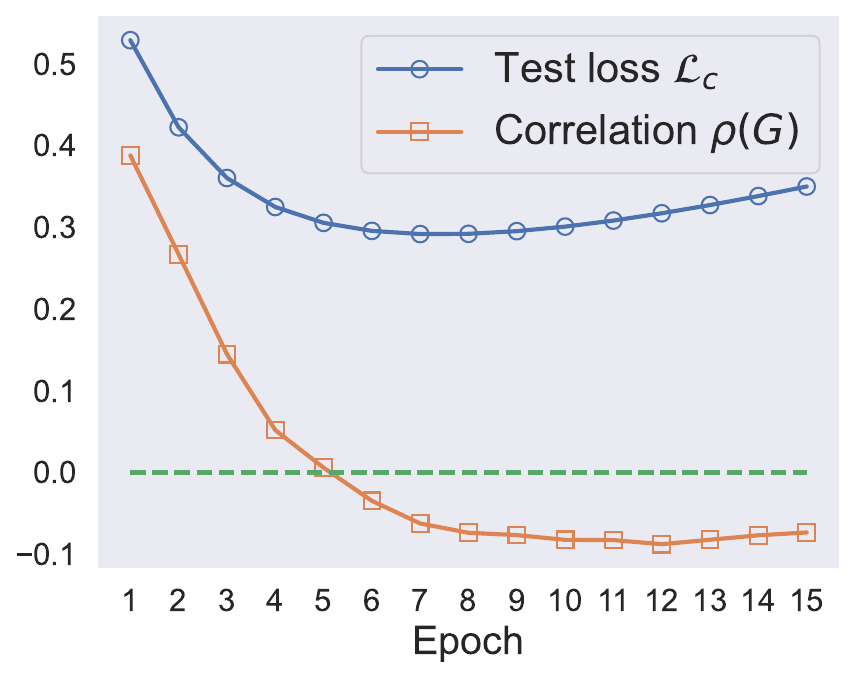}
        \vskip -1.3em
\caption{\footnotesize Positive $\rho(G)$ can help reduce test loss.}
\vskip -1em
\label{fig:thm1}
\end{minipage}
\end{wrapfigure}

The proof can be found in Appendix~\ref{app:thereom1}. Theorem 1 suggests that when the gradients from classification loss and surrogate loss have a positive correlation, i.e., $\rho\left({G}\right)>0$, we can update the test graph by performing gradient descent with a sufficiently small learning rate such that the classification loss on the test samples is reduced. Hence, it is imperative to find a surrogate task that shares relevant information with the classification task. {To empirically verify the effectiveness of Theorem 1, we adopt the surrogate loss in Equation~(\ref{eq:contrast}) as $\mathcal{L}_s$ and plot the values of $\rho(G)$ and $\mathcal{L}_c$ on one test graph in \cora in Figure~\ref{fig:thm1}. We can observe that a positive $\rho{(G)}$ generally reduces the test loss. Results on different losses can be found in Appendix~\ref{sec:ssl} and similar patterns are exhibited.}


\textbf{Parameter-Free Surrogate Loss.} As one popular self-supervised paradigm, graph contrastive learning has achieved promising performance in various   tasks~\citep{hassani2020contrastive,you2021graph,zhu2021graph}, which indicates that graph contrastive learning tasks are often highly correlated with downstream tasks.  This property is desirable for guiding TTGT as suggested by Theorem~1. At its core lies the contrasting scheme, where the similarity between two augmented views from the same sample is maximized, while the similarity between views from two different samples is minimized. 
However, {the majority of} existing graph contrastive learning methods cannot be directly applied to our scenario, as {they often require} a parameterized projection head to map augmented representations to another latent space, which inevitably alters the model architecture. 
Thus, we design a parameter-free surrogate loss which removes the projection head. 
Specifically,
we apply an augmentation function $\mathcal{A}(\cdot)$ on input graph $G$ and obtain an augmented graph $\mathcal{A}(G)$. The node representations obtained from the two graphs are denoted as $Z$ and $\hat{Z}$, respectively; ${\bf z}_i$ 
and $\hat{\bf z}_i$ stand for the $i$-th node representation taken from them, respectively. We adopt DropEdge~\citep{Rong2020DropEdge} as the augmentation function $\mathcal{A(\cdot)}$, and the node representations are taken from the second last layer of the trained model. Essentially, we maximize the similarity between original nodes and their augmented view while penalizing the similarity between the nodes and their negative samples:
\begin{equation}
    \min \mathcal{L}_s = \sum_{i=1}^{N} (1-\frac{\hat{\bf z}_i^\top {\bf z}_i}{\|\hat{\bf z}_i\|\|{\bf z}_i\|}) - \sum_{i=1}^{N} (1-\frac{\tilde{\bf z}_i^\top {\bf z}_i}{\|\tilde{\bf z}_i\|\|{\bf z}_i\|}),
\label{eq:contrast}
\end{equation}
where $\{\tilde{\bf z}_i|i=1,\ldots,N\}$ are the  negative samples for corresponding nodes, which are generated by shuffling node features~\citep{velickovic2019deep}. 
In Eq.~(\ref{eq:contrast}), the first term encourages each node to be close 
while the second term pushes each node away from the corresponding negative sample. Note that (1) $\mathcal{L}_s$ is parameter-free and does not require modification of the model architecture, or affect the pre-training process; (2) there could be other self-supervised signals for guiding the graph transformation, and we empirically compare them with the contrastive loss in Appendix~\ref{sec:ssl}. {We also highlight that our unique contribution is not the loss in Eq.~(\ref{eq:contrast}) but the proposed TTGT framework as well as the theoretical and empirical insights on how to choose a suitable surrogate loss}. Furthermore, the algorithm of \method is provided in Appendix~\ref{appendix:algorithm}.

\vspace{-0.3em}
\subsection{Further Analysis}\label{sec:analysis}
\vspace{-0.3em}

In this subsection, we  study the theoretical property of Eq.~(\ref{eq:contrast}) and compare the strategy of adapting data versus that of adapting model. 
We first demonstrate the rationality of the proposed surrogate loss through the following theorem. 

\begin{theorem}
Assume that the augmentation function $\mathcal{A}(\cdot)$ generates a data view of the same class for the test nodes and the node classes are balanced. Assume for each class, the mean of the representations obtained from ${Z}$ and $\hat{Z}$ are the same. Minimizing the first term in Eq.~(\ref{eq:contrast}) is approximately minimizing the class-conditional entropy $H({ Z}|{Y})$ between features $Z$ and labels $Y$.
\end{theorem}

\vspace{-0.5em}
The proof can be found in Appendix~\ref{app:thereom2}. Theorem 2 indicates that minimizing the first term in Eq.~(\ref{eq:contrast}) will approximately minimize $H(Z|Y)$, which encourages high intra-class compactness, i.e., learning a low-entropy cluster in the embedded space for each class. {Notably, $H(Z|Y)$ can be rewritten as $H(Z|Y)=H(Z)-I(Z,Y)$. It indicates that minimizing Eq.~(\ref{eq:contrast}) can also help promote $I(Z,Y)$, the mutual information between the hidden representation and downstream class.} 
However, we note that only optimizing this term can result in collapse (mapping all data points to a single point in the embedded space), which stresses the necessity of the second term in Eq.~(\ref{eq:contrast}).

Next, we use an illustrative example to show that adapting data at test-time can be more useful than adapting model in some cases. Given test samples $\{{\bf x}_i|i=1,\ldots,K\}$, we consider a linearized GNN $f_\theta$ which first performs aggregation through a function $\text{Agg}(\cdot,\cdot)$ and then transforms the aggregated features via a function $\text{Trans}(\cdot)$. Hence, only the function $\text{Trans}(\cdot)$ is parameterized by $\theta$. 

\textbf{Example.} Let $\mathcal{N}_i$ denote the neighbors for node ${\bf x}_i$. If there exist two nodes with the same aggregated features but different labels, i.e., $\text{Agg}({\bf x}_1, \{{\bf x}_i|i\in \mathcal{N}_{1}\})=\text{Agg}({\bf x}_2, \{{\bf x}_j|j\in \mathcal{N}_{2}\}), y_1\neq{y_2}$, adapting the data $\{{\bf x}_i|i=1,\ldots,K\}$ can achieve lower classification error than adapting the model $f_\theta$ at test stage.

\vspace{-0.2em}
\textit{Illustration.}
Let $\bar{\bf x}_1=\text{Agg}({\bf x}_1, \{{\bf x}_i|i\in \mathcal{N}_{1}\})$ and ${\bar{\bf x}_2}=\text{Agg}({\bf x}_2, \{{\bf x}_j|j\in \mathcal{N}_{2}\})$. For simplicity, we 

consider the following mean square loss as the classification error:
\begin{equation}
    \ell = \frac{1}{2}\left(\text{Trans}(\bar{\bf x}_1)) - y_1)^2 + (\text{Trans}(\bar{\bf x}_2) - y_2)^2\right).
\end{equation}
It is easy to see that $\ell$ reaches its minimum when $\text{Trans}(\bar{\bf x}_1)=y_1$ and $\text{Trans}(\bar{\bf x}_2)=y_2$.
In this context, it is impossible to find $\theta$ such that $\text{Trans}(\cdot)$ can map ${\bf x}_1, {\bf x}_2$ to different labels since it is not a one-to-many function. However, since $y_1$ and $y_2$ are in the label space of training data, we can always modify the test graph to obtain newly aggregated features $\bar{\bf x}'_1,\bar{\bf x}'_2$ such that $\text{Trans}(\bar{\bf x}'_1)=y_1$ and $\text{Trans}(\bar{\bf x}'_2)=y_2$, which minimizes $\ell$. In the extreme case, we may drop all node connections for the two nodes, and let ${\bf x}_1
\leftarrow{} \bar{\bf x}'_1$ and ${\bf x}_2\leftarrow{}\bar{\bf x}'_2$ where $\bar{\bf x}'_1$ and $\bar{\bf x}'_2$ are the aggregated features taken from the training set. Hence, adapting data can achieve lower classification loss.

\vspace{-0.2em}
\textbf{Remark 1.} Note that the existence of two nodes with the same aggregated features but different labels is not rare when considering adversarial attack or abnormal features. We provide a figurative example in Figure~\ref{fig:example} in Appendix~\ref{app:example}: the attacker injects one adversarial edge into the graph and changes the aggregated features $\bar{\bf x}_1$ and $\bar{\bf x}_2$ to be the same.

\vspace{-0.2em}
\textbf{Remark 2.} When we consider $\bar{\bf x}_1\neq \bar{\bf x}_2, y_1\neq y_2$, whether we can find $\theta$ satisfying $\text{Trans}(\bar{\bf x}_1)=y_1$ and $\text{Trans}(\bar{\bf x}_2)=y_2$ depends on the expressiveness of the the transformation function. If it is not powerful enough (e.g., an under-parameterized neural network), it could fail to map different data points to different labels. On the contrary, adapting the data does not suffer this problem as we can always modify the test graph to satisfy $\text{Trans}(\bar{\bf x}'_1)=y_1$ and $\text{Trans}(\bar{\bf x}'_2)=y_2$.

\vspace{-0.2em}
\textbf{Remark 3.} The above discussion can be easily extended to nonlinear GNN by considering $\bar{\bf x}_1, \bar{\bf x}_2$ as the output  before the last linear layer of GNN.

\vspace{-0.5em}
\section{Experiment}
\vspace{-0.5em}

\subsection{Generalization on Out-of-distribution Data}
\vspace{-0.2em}

\textbf{Setup.} Following the settings in EERM~\citep{wu2022handling}, which is designed for node-level tasks on OOD data, we validate \method on three types of distribution shifts with six benchmark datasets: (1) artificial transformation for \cora{}~\citep{yang2016revisiting} and \amzphoto~\citep{shchur2018pitfalls}, (2)  cross-domain transfers for \twitch and \fb~\citep{rozemberczki2021multi}~\citep{lim2021new}, and (3) temporal evolution for \elliptic~\citep{pareja2020evolvegcn} and \arxiv~\citep{hu2020open}. Moreoever, \cora{} and \amzphoto{} have 1/1/8 graphs for training/validation/test sets. The splits are 1/1/5 on \twitch{}, 3/2/3 on \fb, 5/5/33 on \elliptic, and 1/1/3 on \arxiv. 
More details on the datasets are provided in Appendix~\ref{appendix:data}. 
We compare \method with four baselines: empirical risk minimization (ERM, i.e., standard training), data augmentation technique DropEdge~\citep{Rong2020DropEdge}, test-time-training method Tent~\citep{wang2021tent}, and the recent SOTA method EERM~\citep{wu2022handling} which is exclusively developed for graph OOD issue. 
Further, we evaluate all the methods with four popular GNN backbones including GCN~\citep{kipf2016semi}, GraphSAGE~\citep{hamilton2017inductive}, GAT~\citep{gat}, and GPR~\citep{chen2021gpr}. 
Their default setup follows that in EERM\footnote{We note that the GCN used in the experiments of EERM does not normalize the adjacency matrix according to its open-source code. Here we normalize the adjacency matrix to make it consistent with the original GCN.}. 
We refer the readers to Appendix~\ref{app:hyper} for more implementation details of baselines and \method. Notably, all experiments in this paper are repeated 10 times with different random seeds. Due to page limit, we include more baselines such as SR-GNN~\citep{zhu2021shift} and 
UDA-GCN~\citep{wu2020unsupervised} in Appendix~\ref{app:graphda}.

\textbf{Results.} Table~\ref{tab:ood} reports the averaged performance over the test graphs for each dataset as well as the averaged rank of each algorithm. From the table, we make the following observations: 

\vskip-0.2em
(a) Overall Performance. The proposed framework consistently achieves strong performance across the datasets: \method{} achieves average ranks of 1.0, 1.7, 2.0 and 1.7 with GCN, SAGE, GAT and GPR, respectively, while the corresponding ranks for the best baseline EERM are 2.9, 3.4, 3.0 and 2.0. Furthermore, in most of the cases, \method{} significantly improves the vanilla baseline (ERM) by a large margin. Particularly, when using GCN as backbone, \method{} outperforms ERM by 3.1\%, 5.0\% and 2.0\% on \cora{}, \elliptic and \arxiv, respectively. These results demonstrate the effectiveness of \method{} in tackling diverse types of distribution shifts.

\vskip-0.2em
(b) Comparison to other baselines. Both DropEdge and EERM modify the training process to improve model generalization. Nonetheless, they are less effective than \method, as \method takes advantage of the information from test graphs. As a test-time training method, Tent also performs well in some cases, but Tent only adapts the parameters in batch normalization layers and cannot be applied to models without batch normalization. 

We further show the performance on each test graph on \cora with GCN in Figure~\ref{fig:box} and the results for other datasets are provided in Appendix~\ref{app:ood}. We observe that \method generally improves over individual test graphs within each dataset, which validates the effectiveness of \method.

\textbf{Efficiency Comparison.} \label{sec:efficiency} Since EERM performs the best among baselines,
Table~\ref{tab:efficiency} showcases the efficiency comparison between our proposed \method and EERM on the largest test graph in each dataset. 
The additional running time of \method majorly depends on the number of gradient descent steps. As we only use a small number (5 or 10) throughout all the experiments, the time overhead brought by \method is negligible. Compared with the re-training method EERM, \method avoids the complex bilevel optimization and thus is significantly more efficient. Furthermore, EERM imposes a considerably heavier memory burden. 

\begin{table}[t]
\centering
\caption{Average classification performance (\%) on the test graphs. Rank indicates the average rank of each algorithm for each backbone. OOM indicates out-of-memory error on 32 GB GPU memory. The proposed \method  consistently ranks the best compared with the baselines. {$^*$/$^{**}$ indicates that GTrans outperforms ERM at the confidence level 0.1/0.05 from paired t-test.}}
\label{tab:ood}
\vskip -1em
\footnotesize
\setlength{\tabcolsep}{3.5pt}
\begin{threeparttable}
 \resizebox{0.9\textwidth}{!}{
\begin{tabular}{@{}l|lllllll|c@{}}
\toprule
Backbone & Method   & \photo{}           & \cora{}                & \elliptic{}           & \fb{}               & \arxiv{}           & \twitch{}            & Rank \\ \midrule
GCN      & ERM      & 93.79±0.97          & 91.59±1.44          & 50.90±1.51          & 54.04±0.94          & 38.59±1.35          & 59.89±0.50          & 3.8  \\
         & DropEdge & 92.11±0.31          & 81.01±1.33          & 53.96±4.91          & 53.00±0.50          & 41.26±0.92          & 59.95±0.39          & 3.6  \\
         & Tent     & 94.03±1.07          & 91.87±1.36          & 51.71±2.00          & 54.16±1.00          & 39.33±1.40          & 59.46±0.55          & 3.3  \\
         & EERM     & 94.05±0.40          & 87.21±0.53          & 53.96±0.65          & 54.24±0.55          & OOM                 & 59.85±0.85          & 2.9  \\
         & \method     & \textbf{94.13±0.77}$^*$ & \textbf{94.66±0.63}$^{**}$ & \textbf{55.88±3.10}$^{**}$ & \textbf{54.32±0.60} & \textbf{41.59±1.20}$^{**}$ & \textbf{60.42±0.86}$^*$ & \textbf{1.0}  \\ \midrule
SAGE     & ERM      & 95.09±0.60          & 99.67±0.14          & 56.12±4.47          & 54.70±0.47          & 39.56±1.66          & 62.06±0.09          & 3.2  \\
         & DropEdge & 92.61±0.56          & 95.85±0.30          & 52.38±3.11          & 54.51±0.69          & 38.89±1.74          & 62.14±0.12          & 4.2  \\
         & Tent     & 95.72±0.43          & \textbf{99.80±0.10} & 55.89±4.87          & \textbf{54.86±0.34} & 39.58±1.26          & 62.09±0.09          & 2.3  \\
         & EERM     & 95.57±0.13          & 98.77±0.14          & 58.20±3.55          & 54.28±0.97          & OOM                 & 62.11±0.12          & 3.4  \\
         & \method     & \textbf{96.91±0.68}$^{**}$ & 99.45±0.13          & \textbf{60.81±5.19}$^{**}$ & 54.64±0.62          & \textbf{40.39±1.45}$^{**}$ & \textbf{62.15±0.13}$^{*}$ &\textbf{ 1.7 } \\  \midrule
GAT      & ERM      & 96.30±0.79          & 94.81±1.28          & 65.36±2.70          & 51.77±1.41          & 40.63±1.57          & 58.53±1.00          & 3.0  \\
         & DropEdge & 90.70±0.29          & 76.91±1.55          & 63.78±2.39          & 52.65±0.88          & 42.48±0.93          & 58.89±1.01          & 3.3  \\
         & Tent     & 95.99±0.46          & 95.91±1.14          & 66.07±1.66          & 51.47±1.70          & 40.06±1.19          & 58.33±1.18          & 3.3  \\
         & EERM     & 95.57±1.32          & 85.00±0.96          & 58.14±4.71          & \textbf{53.30±0.77} & OOM                 & \textbf{59.84±0.71} & 3.0  \\
         & \method     & \textbf{96.67±0.74}$^{**}$ & \textbf{96.37±1.00}$^{**}$ & \textbf{66.43±2.57}$^{**}$ & 51.16±1.72          & \textbf{43.76±1.25}$^{**}$ & 58.59±1.07          & \textbf{2.0 } \\   \midrule
GPR      & ERM      & 91.87±0.65          & 93.00±2.17          & 64.59±3.52          & 54.51±0.33          & 44.38±0.59          & 59.72±0.40          & 2.7  \\
         & DropEdge & 88.81±1.48          & 79.27±1.39          & 61.02±1.78          & 55.04±0.33          & 43.65±0.77          & 59.89±0.05          & 3.3  \\
         & Tent\footnote{}     & -                   & -                   & -                   & -                   & -                   & -                   & -    \\
         & EERM     & 90.78±0.52          & 88.82±3.10          & 67.27±0.98          & \textbf{55.95±0.03} & OOM                 & \textbf{61.57±0.12} & 2.0  \\
         & \method     & \textbf{91.93±0.73} & \textbf{93.05±2.02} & \textbf{69.03±2.33}$^{**}$ & 54.38±0.31          & \textbf{46.00±0.46}$^{**}$ & 60.11±0.53$^{**}$          & \textbf{1.7}  \\ \bottomrule
\end{tabular}
}
\begin{tablenotes}
   \item[3] Tent cannot be applied to models which do not contain batch normalization layers.
  \end{tablenotes}
\end{threeparttable}
\vskip -1em
\end{table}

\begin{table}[t!]%
\begin{minipage}[c]{0.48\textwidth}
     \centering
    \resizebox{1.\textwidth}{!}{
\includegraphics[]{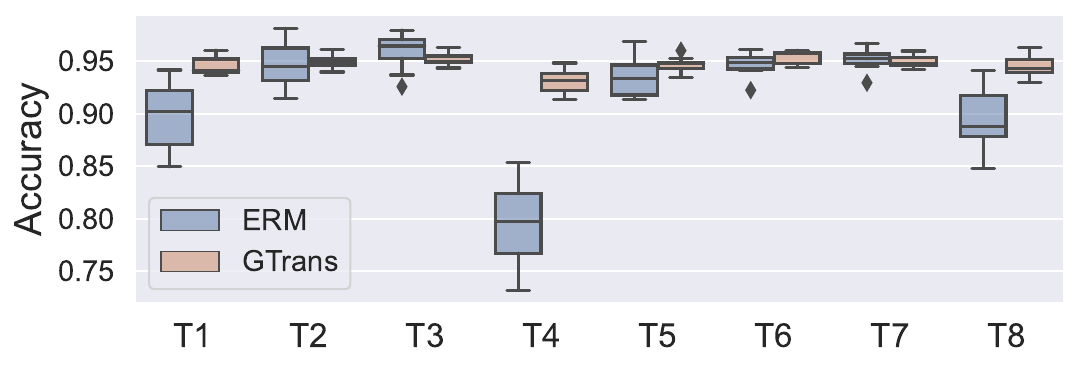}
      }
\vskip -1.2em
\captionof{figure}{Results on \cora{} under OOD. \method improves GCN on most test graphs.}
\label{fig:box}
\end{minipage}
\begin{minipage}[c]{0.52\textwidth}
\centering
\setlength{\tabcolsep}{2pt}
\renewcommand{\arraystretch}{1.15}
    \resizebox{1.\textwidth}{!}{
\begin{tabular}{@{}lcccc|cccc@{}}
\toprule
     & \multicolumn{4}{l}{Extra Running   Time (s)} & \multicolumn{4}{r}{Total GPU Memory (GB)} \\ 
     & Cora      & Photo      & Ellip.     & Arxiv     & Cora   & Photo   & Ellip.   & Arxiv  \\ \midrule
EERM & 25.9      & 396.4      & 607.9        & -         & 2.5    & 10.5    & 12.8       & $>$32    \\
\method & 0.3       & 0.5        & 0.6          & 2.6       & 1.4    & 1.5     & 1.3        & 3.9    \\ \bottomrule
\end{tabular}
    }
\caption{Efficiency comparison. \method is more time- and memory-efficient than EERM.}
\label{tab:efficiency}
\end{minipage}
\vskip -2em
\end{table}



\vspace{-0.4em}
\subsection{Robustness to Abnormal Features} \label{sec:abnormal}
\vspace{-0.2em}
\textbf{Setup.} Following the setup in AirGNN~\citep{liu2021graph}, 
we evaluate the robustness in the case of abnormal features. Specifically, we simulate abnormal features by assigning random features taken from a multivariate standard Gaussian distribution to a portion of randomly selected test nodes. Note that the abnormal features are injected after model training (at test time) and we vary the ratio of noisy nodes from 0.1 to 0.4 with a step size of 0.05. 
This process is performed for four datasets: the original version of \cora, \citeseer, \pubmed, and \arxiv. In these four datasets, the training graph and the test graph have the same graph structure but the node features are different. Hence, we use the training classification loss combined with the proposed contrastive loss to optimize \method. We use GCN as the backbone model and adopt four GNNs as the baselines including GAT~\citep{gat}, APPNP~\citep{klicpera2018predict-appnp}, AirGNN and AirGNN-t.  Note that AirGNN-t tunes the message-passing hyper-parameter in AirGNN at test time. For a fair comparison, we tune AirGNN-t based on the performance on both training and validation nodes. 

\textbf{Results.} For each model, we present the node classification accuracy on both abnormal nodes and all test nodes (i.e., both normal and abnormal ones) in Figure~\ref{fig:abnormal} and Figure~\ref{fig:all} (See Appendix~\ref{app:abnormal-more}), respectively. From these figures, we have two observations. First, \method{} significantly improves GCN in terms of the performance on abnormal nodes and all test nodes for all datasets across all noise ratios. For example, on \cora{} with 30\% noisy nodes, \method improves GCN by 48.2\% on abnormal nodes and 31.0\% on overall test accuracy. This demonstrates the effectiveness of the graph transformation process in \method in alleviating the effect of abnormal features. Second, \method{} shows comparable or better performance with AirGNNs, which are the SOTA defense methods for tackling abnormal features.
It is worth mentioning that  AirGNN-t improves AirGNN by tuning its hyper-parameter at test time, which aligns with our motivation that test-time adaptation can enhance model test performance.
To further understand the effect of graph transformation, we provide the visualization of the test node embeddings obtained from abnormal graph (0.3 noise ratio) and transformed graph for \cora{} in Figures~\ref{fig:tsne_abnormal} and~\ref{fig:tsne_ours}, respectively.  We observe that the transformed graph results in well-clustered node representations, which indicates that \method{} can promote intra-class compactness and counteract the effect of abnormal patterns.

\begin{figure}[t!]%
\vskip -1em
     \centering
     \subfloat[\cora]{{\includegraphics[width=0.25\linewidth]{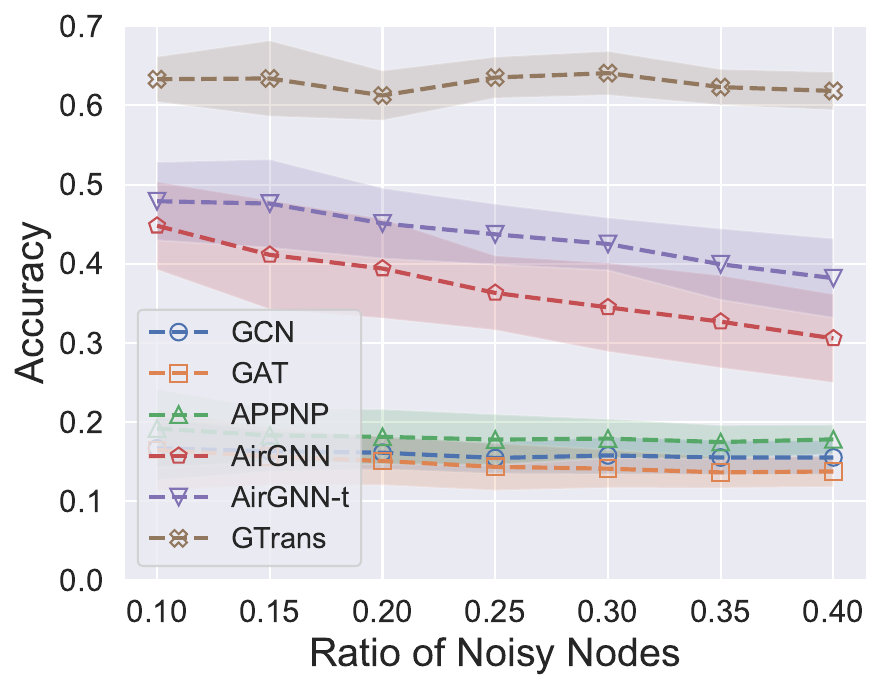} }}%
      \subfloat[\citeseer]{{\includegraphics[width=0.25\linewidth]{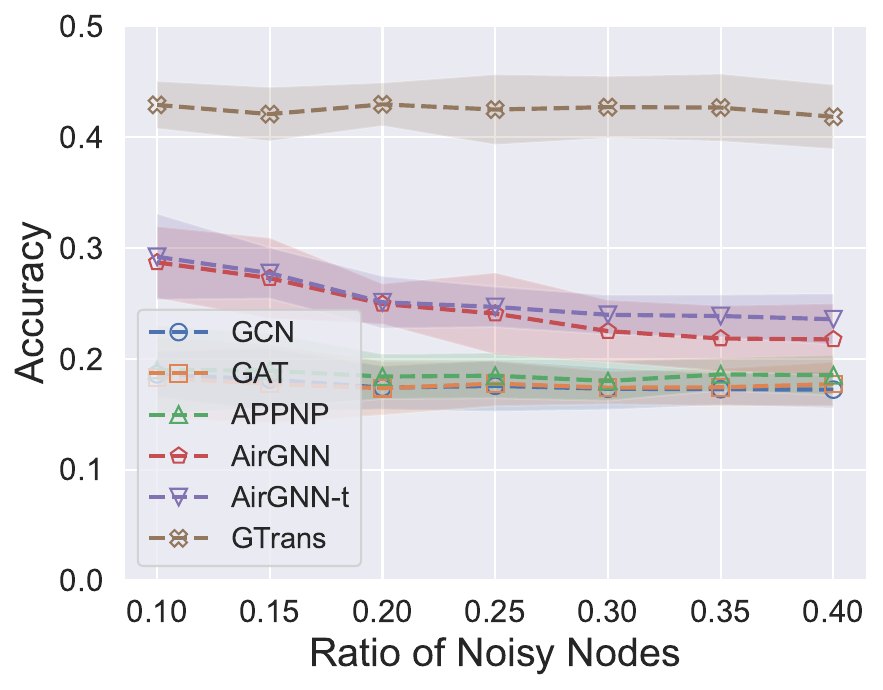} }}%
\subfloat[\pubmed]{{\includegraphics[width=0.25\linewidth]{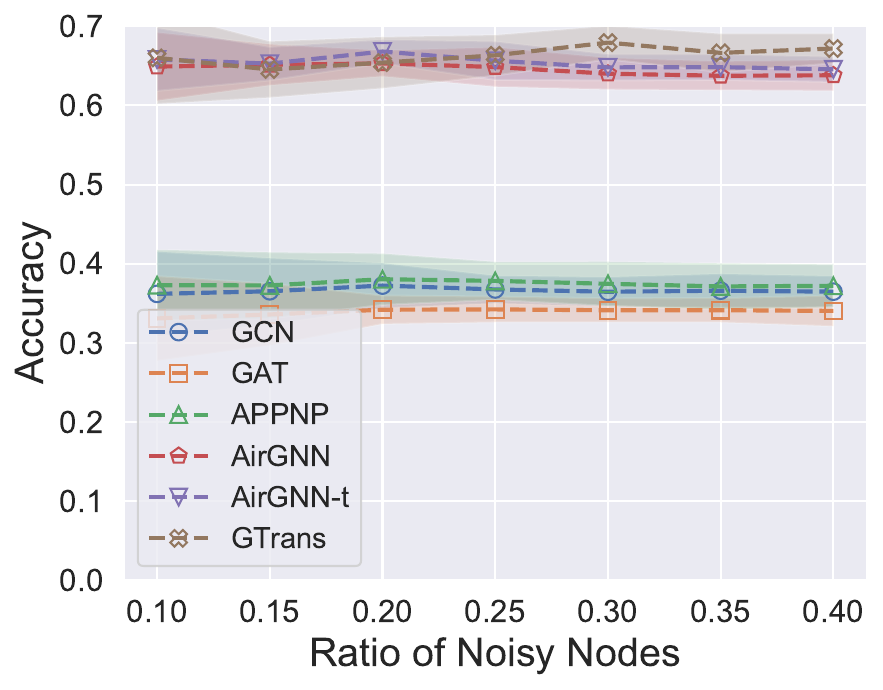} }}%
\subfloat[\arxiv]{{\includegraphics[width=0.25\linewidth]{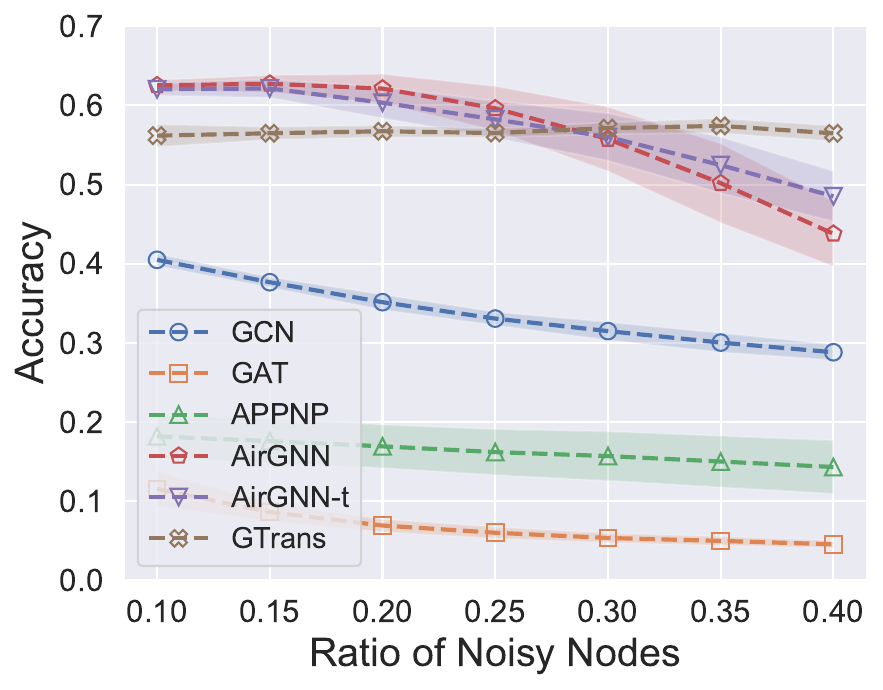} }}%
    \qquad
\vskip -0.7em
\caption{Node classification accuracy on abnormal (noisy) nodes.}
\vskip -1.5em
\label{fig:abnormal}
\end{figure}

\begin{figure}[t]%
\vskip -1em
     \centering
     \subfloat[Abnormal Graph]{{\includegraphics[width=0.1925\linewidth,height=2.55cm]{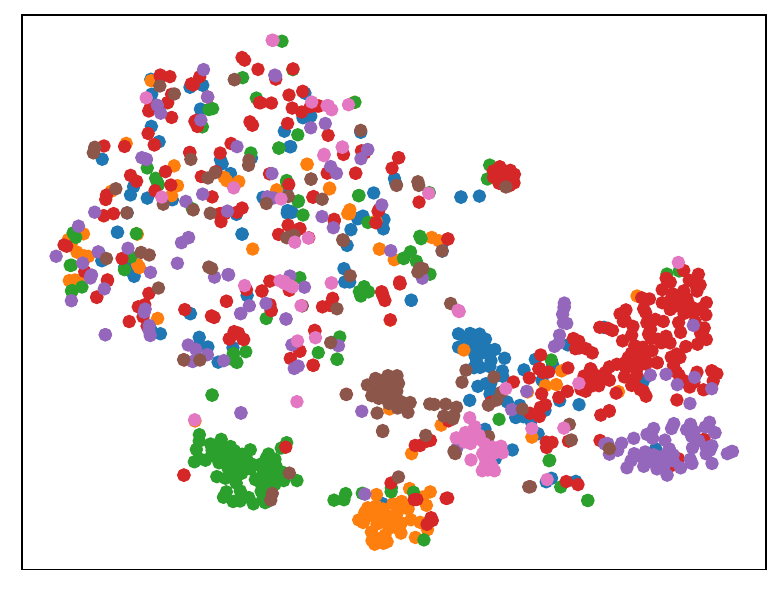} }\label{fig:tsne_abnormal}}%
      \subfloat[Refined Graph]{{\includegraphics[width=0.1925\linewidth,height=2.55cm]{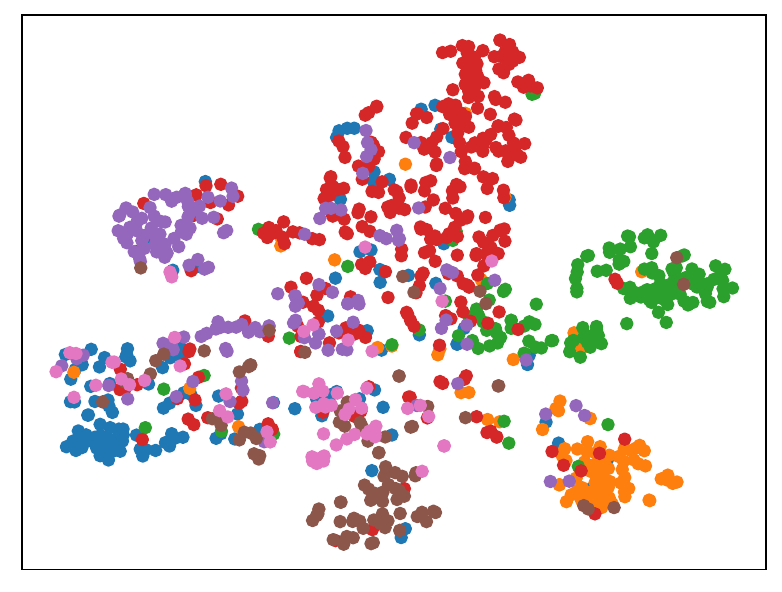} }\label{fig:tsne_ours}}%
     \subfloat[OOD Setting]{{\includegraphics[width=0.305\linewidth]{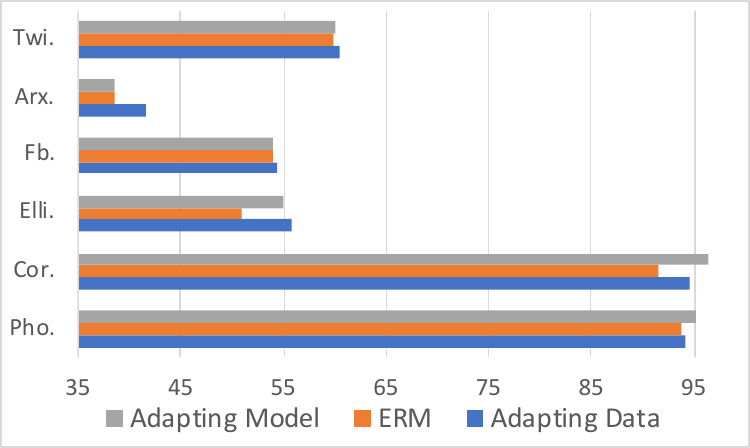} }\label{fig:dvm_ood}}%
     \subfloat[Abnormal Features]{{\includegraphics[width=0.3\linewidth]{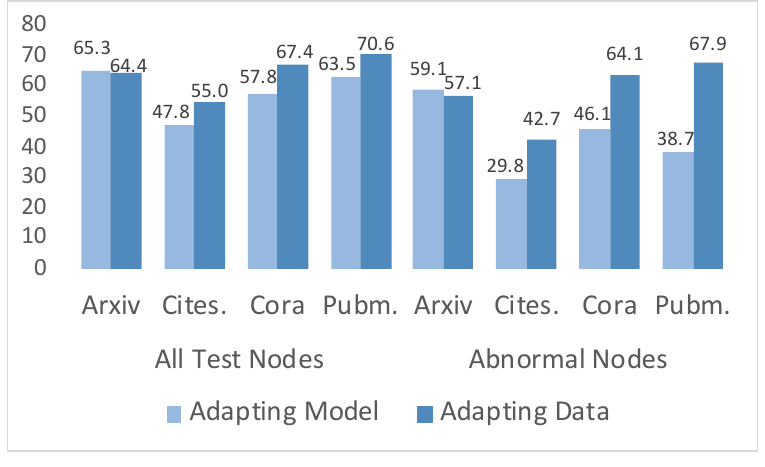} }\label{fig:dvm_abn}}%
    \qquad
\vskip -0.5em
\caption{\textbf{(a)(b)} T-SNE visualizations of embedding obtained from abnormal graph and transformed graph on \cora. \textbf{(c)(d)} Comparison between adapting data and adapting model at test time.}
\vskip -2em
\label{fig:relation}
\end{figure}


\vspace{-0.5em}
\subsection{Robustness to Adversarial Attack}
\textbf{Setup.} We further evaluate \method{} under the setting of adversarial attack where we perturb the test graph, i.e., evasion attack. Specifically, we use PR-BCD~\citep{geisler2021robustness}, a scalable attack method, to attack the  test graph in \arxiv. We focus on structural attacks, and vary the perturbation rate, i.e., the ratio of changed edges, from 5\% to 25\% with a step of 5\%. 
Similar to Section~\ref{sec:abnormal}, we adopt the training classification loss together with the proposed contrastive loss to optimize \method. We use GCN as the backbone and  employ four robust baselines implemented by the adversarial attack repository DeepRobust~\citep{li2020deeprobust} including GAT~\citep{gat}, RobustGCN~\citep{zhu2019robust}, SimPGCN~\citep{jin2021node} and GCNJaccard~\citep{xu2019topology-attack} as comparisons. Among them, GCNJaccard pre-processes the attacked graph by removing edges where the similarities of connected nodes are less than a threshold; we tune this threshold at test time based on the performance on both training and validation nodes. 

\textbf{Results.} Table~\ref{tab:attack} reports the performances under structural evasion attack. We observe that \method{} consistently improves the performance of GCN under different perturbation rates of adversarial attack. Particularly, \method improves GCN by a larger margin when the perturbation rate is higher. For example, \method outperforms GCN by over 40\% under the 25\% perturbation rate. Such observation suggests that \method can counteract the devastating effect of adversarial attacks. In addition, the best performing baseline GCNJaccard also modifies the graph at test time,
which demonstrates the importance of test-time graph adaptation. Nonetheless, it consistently underperforms our proposed \method, indicating that a learnable transformation function is needed to achieve better robustness under  adversarial attacks, which GCNJaccard does not employ. 

\textbf{Interpretation.} To understand the modifications made on the graph, we compare several properties among clean graph, attacked graph (20\% perturbation rate), graph obtained by GCNJaccard, and graph obtained by \method{} in Table~\ref{tab:stats}  in Appendix~\ref{app:interpretation}. First, adversarial attack decreases homophily and feature similarity, but \method{} and GCNJaccard promote such information to alleivate the adversarial patterns. Our experiment also shows that \method{} removes 77\% adversarial edges while removing 30\% existing edges from the attacked graph. Second, both \method and GCNJaccard focus on deleting edges from the attacked graph, but GCNJaccard removes a substantially larger amount of edges, which may destroy clean graph structure and lead to sub-optimal performance.


\begin{table}[t]
\caption{Node classification accuracy (\%) under different perturbation (Ptb.) rates of structure attack.}
\vskip -1em
\label{tab:attack}
\footnotesize
\centering
\begin{tabular}{@{}cccccc|c@{}}
\toprule
Ptb. Rate & GCN        & GAT        & RobustGCN       & SimPGCN    & GCNJaccard & \method             \\ \midrule
5\%  & 57.47±0.54 & 64.56±0.43 & 61.55±1.20 & 61.30±0.42 & 65.01±0.26 & \textbf{66.29±0.25}          \\
10\% & 47.97±0.65 & 61.20±0.70 & 58.15±1.55 & 57.01±0.70 & 63.25±0.30 & \textbf{65.16±0.52}           \\
15\% & 38.04±1.22 & 58.96±0.59 & 55.91±1.27 & 54.13±0.73 & 61.83±0.29 & \textbf{64.40±0.38}          \\
20\% & 29.05±0.73 & 57.29±0.49 & 54.39±1.09 & 52.26±0.87 & 60.57±0.34 & \textbf{63.44±0.50}          \\
25\% & 19.58±2.32 & 55.86±0.53 & 52.76±1.44 & 50.46±0.85 & 59.17±0.39 & \textbf{62.95±0.67}           \\ \bottomrule
\end{tabular}
\vskip -1.5em
\end{table}

\begin{wrapfigure}{r}{0.37\textwidth}
\vspace{-2em}
\begin{center}
\makeatletter\def\@captype{table}\makeatother\caption{Transferability.}
\vskip-1em
    \label{tab:cross}
    \footnotesize
    \setlength{\tabcolsep}{2pt}
    {
    \begin{tabular}{@{}lcccc@{}}
    \toprule
    Tr\textbackslash{}Te & GCN        & APPNP      & AirGNN     & GAT        \\ \midrule
    GCN                  & 67.36 & 70.65 & 70.84 & 58.62 \\
    APPNP                & 67.87 & 70.39 & 69.59 & 64.46 \\
    AirGNN               & 68.00 & 70.37 & 72.68 & 64.93 \\
    GAT                  & 54.85 & 60.37 & 65.22 & 54.60 \\\midrule
    Noisy              & 44.29 & 48.26 & 58.51 & 21.23 \\ \bottomrule
    \end{tabular}}
\end{center}
\vskip-2em
\end{wrapfigure}

\vspace{-0.4em}
\subsection{Further Analysis}
\vspace{-0.3em}
\textbf{Cross-Architecture Transferability.}  
Since the outcome of \method is a refined graph, it can conceptually be employed by any GNN model. Thus, we can transform the graph based on one pre-trained GNN and test the transformed graph on another pre-trained GNN. To examine such transferability, we perform experiments GCN, APPNP, AirGNN and GAT under the abnormal feature setting with 30\% noisy nodes on \cora. The results on all test nodes in Table~\ref{tab:cross}. Note that ``Tr" stands for GNNs used in TTGT while ``Te'' denotes GNNs used for obtaining predictions on the transformed graph; ``Noisy" indicates the performance on the noisy graph.
We observe that the transformed graph yields good performance even outside the scope it was optimized for. We anticipate that such transferability can alleviate the need for costly re-training on new GNNs. 

\textbf{Adapting Model vs. Adapting Data.} We empirically compare the performance between adapting data and adapting model and consider the OOD and abnormal feature settings. Specifically, we use GCN as the backbone and adapt the model parameters by optimizing the same loss function as used in \method. The results are shown in Figures~\ref{fig:dvm_ood} and~\ref{fig:dvm_abn}. In OOD setting, both adapting model and adapting data can generally improve GCN's performance. Since their performances are still close, it is hard to give a definite answer on which strategy is better. However, we can observe significant performance differences when the graph contains abnormal features: adapting data outperforms adapting model on 3 out of 4 datasets. This suggests that adapting data can be more powerful when the data is perturbed, which aligns with our analysis in Section~\ref{sec:analysis}.



\textbf{Learning Features v.s. Learning Structure.} Since our framework learns both node features and graph structure, we investigate when one component plays a more important role than the other. Our results are shown in Tables~\ref{tab:fs_abnormal} and~\ref{tab:fs_attack} in Appendix~\ref{app:fs}. From the tables, we observe that (1) while each component can improve the vanilla performance, feature learning is more crucial for counteracting feature corruption and structure learning is more important for defending structure corruption; and (2)  combining them generally yields a better or comparable performance.


\vspace{-0.5em}
\section{Conclusion}
\vspace{-0.5em}

GNNs tend to yield unsatisfying performance when the presented data is sub-optimal. 
To tackle this issue, we seek to enhance GNNs from a data-centric perspective by transforming the graph data at test time. We propose \method which optimizes a contrastive surrogate loss to transform graph structure and node features, and provide theoretical analysis with deeper discussion to understand this framework. Experimental results on  distribution shift, abnormal features and adversarial attack have demonstrated the effectiveness of our method. In the future, we plan to explore more applications of our framework such as mitigating degree bias and long-range dependency.


\section*{Acknolwedgement}
This research is supported by the National Science Foundation (NSF) under grant numbers IIS1845081, IIS1928278, IIS1955285, IIS2212032, IIS2212144, IOS2107215, and IOS2035472, the Army Research Office (ARO) under grant number W911NF-21-1-0198, MSU Foundation, the Home Depot, Cisco Systems Inc, Amazon Faculty Award, Johnson \& Johnson and Snap Inc.

\section*{Ethics Statement}
To the best of our knowledge, there are no ethical issues with this paper.

\section*{Reproducibility Statement}
To ensure reproducibility of our experiments, we provide our source code at \url{https://github.com/ChandlerBang/GTrans}.  The hyper-parameters are described in details in the appendix. We also provide a pseudo-code implementation of our framework in the appendix.

\bibliography{refs}
\bibliographystyle{iclr2023_conference}

\newpage
\appendix
\section{Proofs}

\setcounter{theorem}{0}
\subsection{Thereom 1}
\label{app:thereom1}
\begin{theorem} Let $\mathcal{L}_c$ denote the classification loss and $\mathcal{L}_s$ denote the surrogate loss, respectively. Let $\rho(G)$ denote the correlation between $\nabla_{G}\mathcal{L}_c(G,\mathcal{Y})$ and $\nabla_{G}\mathcal{L}_s(G)$, and let $\epsilon$ denote the learning rate for gradient descent. Assume that $\mathcal{L}_c$ is twice-differentiable and its Hessian matrix satisfies $\|{\bf H}(G, \mathcal{Y})\|_2\leq M$  for all $G$. When $\rho(G)>0$ and $\epsilon < \frac{2\rho\left({G}\right)\left\|\nabla_{{G}} \mathcal{L}_{c}\left({G}, \mathcal{Y}\right)\right\|_{2}}{M\left\|\nabla_{{G}} \mathcal{L}_{s}\left({G}\right)\right\|_{2}}$, we have
\begin{equation}
\mathcal{L}_{c}\left({G}-\epsilon \nabla_{{G}} \mathcal{L}_{s}\left({G}\right), \mathcal{Y}\right) < \mathcal{L}_{c}\left({G}, \mathcal{Y}\right).
\end{equation}
\end{theorem}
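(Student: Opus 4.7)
The plan is to apply a second-order Taylor expansion of $\mathcal{L}_c$ at $G$ in the perturbation direction $-\epsilon\nabla_{G}\mathcal{L}_s(G)$ and show that, under the stated bound on $\epsilon$, the first-order descent term dominates the curvature remainder. This is a routine adaptation of the descent lemma to a \emph{surrogate} gradient direction rather than the true gradient of $\mathcal{L}_c$.

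First I would invoke Taylor's theorem with Lagrange remainder, which, by the assumed twice-differentiability, gives
\begin{equation*}
\mathcal{L}_c\bigl(G-\epsilon\nabla_{G}\mathcal{L}_s(G),\mathcal{Y}\bigr) = \mathcal{L}_c(G,\mathcal{Y}) - \epsilon\,\langle\nabla_{G}\mathcal{L}_c(G,\mathcal{Y}),\,\nabla_{G}\mathcal{L}_s(G)\rangle + \frac{\epsilon^2}{2}\,\nabla_{G}\mathcal{L}_s(G)^{\top}\mathbf{H}(\xi,\mathcal{Y})\,\nabla_{G}\mathcal{L}_s(G)
\end{equation*}
for some $\xi$ on the segment between $G$ and $G-\epsilon\nabla_{G}\mathcal{L}_s(G)$. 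The uniform spectral bound $\|\mathbf{H}(\cdot,\mathcal{Y})\|_2\le M$ then controls the quadratic remainder by $\tfrac{M\epsilon^2}{2}\|\nabla_{G}\mathcal{L}_s(G)\|_2^{2}$. Rewriting the inner product with the correlation identity $\langle\nabla_{G}\mathcal{L}_c,\nabla_{G}\mathcal{L}_s\rangle = \rho(G)\,\|\nabla_{G}\mathcal{L}_c\|_2\,\|\nabla_{G}\mathcal{L}_s\|_2$ yields
\begin{equation*}
\mathcal{L}_c\bigl(G-\epsilon\nabla_{G}\mathcal{L}_s(G),\mathcal{Y}\bigr) \le \mathcal{L}_c(G,\mathcal{Y}) - \epsilon\,\rho(G)\,\|\nabla_{G}\mathcal{L}_c\|_2\,\|\nabla_{G}\mathcal{L}_s\|_2 + \frac{M\epsilon^2}{2}\,\|\nabla_{G}\mathcal{L}_s\|_2^{2}.
\end{equation*}
A strict decrease is therefore equivalent to the right-hand quadratic in $\epsilon$ being negative; using $\rho(G)>0$ and the implicit non-degeneracy $\|\nabla_{G}\mathcal{L}_s(G)\|_2>0$ (otherwise the hypothesized upper bound on $\epsilon$ is vacuous) and dividing by $\tfrac{\epsilon}{2}\|\nabla_{G}\mathcal{L}_s\|_2^{2}$ recovers precisely the condition $\epsilon<\tfrac{2\rho(G)\|\nabla_{G}\mathcal{L}_c\|_2}{M\|\nabla_{G}\mathcal{L}_s\|_2}$ assumed in the theorem.

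There is no conceptual obstacle; the only points warranting care are (i) that positivity of $\rho(G)$ is what converts the first-order term into genuine descent rather than ascent, (ii) that the degenerate case $\nabla_{G}\mathcal{L}_s(G)=\mathbf{0}$ makes the update trivial and need not be treated separately, and (iii) that the strict inequality in the conclusion follows from the strict inequality assumed on $\epsilon$. Everything else is straightforward bookkeeping on the Taylor expansion together with the Hessian bound.
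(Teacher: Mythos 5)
Your proposal is correct and follows essentially the same route as the paper's proof: a Taylor expansion with Lagrange remainder around $G$ in the direction $-\epsilon\nabla_{G}\mathcal{L}_s(G)$, the correlation identity for the first-order term, the spectral bound $\|\mathbf{H}\|_2\le M$ to control the quadratic remainder, and the observation that the resulting quadratic in $\epsilon$ is strictly negative precisely when $\epsilon$ lies below the stated threshold. The only cosmetic difference is that the paper separately derives the inequality $\mathbf{p}^{\top}\mathbf{A}\mathbf{p}\le\|\mathbf{p}\|_2^2\|\mathbf{A}\|_2$ for symmetric $\mathbf{A}$, which you absorb into the standard spectral-norm bound.
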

\begin{proof}
Given that $\mathcal{L}_c$ is differentiable and twice-differentiable,  we perform first-order Taylor expansion with Lagrange form of remainder at ${G}-\epsilon \nabla_{{G}} \mathcal{L}_{s}({G})$: 
\begin{small}
\begin{align}
& \mathcal{L}_{c}\left({G}-\epsilon \nabla_{{G}} \mathcal{L}_{s}\left({G}\right), \mathcal{Y}\right)  \\ 
=\;   &   \mathcal{L}_{c}\left({G}, \mathcal{Y}\right) -\epsilon \rho\left({G}\right)\left\|\nabla_{{G}} \mathcal{L}_{c}\left({G}, \mathcal{Y}\right)\right\|_{2}\left\|\nabla_{{G}} \mathcal{L}_{s}\left({G}\right)\right\|_{2} + \frac{\epsilon^2}{2} \nabla_{G}\mathcal{L}_{s}({G})^\top {\bf H}({G}-\epsilon \theta \nabla_{{G}} \mathcal{L}_{s}({G}), \mathcal{Y}) \nabla_{G}\mathcal{L}_{s}({G}), \nonumber
\end{align}
\end{small}where $\theta\in(0,1)$ is a constant given by Lagrange form of the Taylor’s remainder (here we slightly abuse the notation), and  $\rho\left({G}\right)$ is the correlation between $\nabla_{G}\mathcal{L}_c(G,\mathcal{Y})$ and $\nabla_{G}\mathcal{L}_s(G)$: 
\begin{equation}
\rho\left({G}\right)=\frac{\nabla_{{G}} \mathcal{L}_{c}\left({G}, \mathcal{Y}\right)^{T} \nabla_{{G}} \mathcal{L}_{s}\left({G}\right)}{\left\|\nabla_{{G}} \mathcal{L}_{c}\left({G}, \mathcal{Y}\right)\right\|_{2}\left\|\nabla_{{G}} \mathcal{L}_{s}\left({G}\right)\right\|_{2}}.
\end{equation}
Before we proceed to the next steps, we first show that given a vector ${\bf p}$ and a symmetric matrix ${\bf A}$, the inequality ${\bf p}\top {\bf A} {\bf p}\leq \|{\bf p}\|^2_2 \|{\bf A}\|_2$ holds:
\begin{align}
    {\bf p}^\top {\bf A} {\bf p} & = \sum \sigma_i {\bf p}^\top {\bf u}_i {\bf u}_i^\top {\bf p}   \quad \text{(Performing SVD on ${\bf A}$, i.e., ${\bf A}=\sum \sigma_i {\bf u}_i {\bf u}_i^\top$)}  \nonumber \\
   & = \sum \sigma_i {\bf v}^\top_i {\bf v}_i \quad \text{(Let ${\bf v}={\bf U^\top p}$, where ${\bf U}=[ {\bf u}_1 ; {\bf u}_2 ; \ldots ; {\bf u}_n]$)} \nonumber \\ 
   & \leq \sum \sigma_\text{max} {\bf v}^\top_i {\bf v}_i 
  =   \sigma_\text{max} \|{\bf v}\|^2_2 =\sigma_\text{max} \|{\bf U^\top p}\|^2_2 \nonumber \\
  &   =  \sigma_\text{max} \|{\bf p}\|^2_2 \quad \text{(${\bf U}$ is an orthogonal matrix)} \nonumber \\
  & = \|{\bf p}\|^2_2 \|{\bf A}\|_2
\end{align}

Since the Hessian matrix is symmetric, we can use the above inequality to derive:
\begin{small}
\begin{align}
& \mathcal{L}_{c}\left({G}-\epsilon \nabla_{{G}} \mathcal{L}_{s}\left({G}\right), \mathcal{Y}\right) \label{eq:taylor} \\
\leq \:  &  \mathcal{L}_{c}\left({G}, \mathcal{Y}\right) -\epsilon \rho\left({G}\right)\left\|\nabla_{{G}} \mathcal{L}_{c}\left({G}, \mathcal{Y}\right)\right\|_{2}\left\|\nabla_{{G}} \mathcal{L}_{s}\left({G}\right)\right\|_{2} + \frac{\epsilon^2}{2} \|\nabla_{G}\mathcal{L}_{s}({G})\|^2_2 \|{\bf H}({G}-\epsilon \theta \nabla_{{G}} \mathcal{L}_{s}({G}), \mathcal{Y})\|_2 . \nonumber
\end{align}
\end{small}

Then we rewrite Eq.~(\ref{eq:taylor}) as:
\begin{equation}
\begin{aligned}
& \mathcal{L}_{c}\left({G}-\epsilon \nabla_{{G}} \mathcal{L}_{s}\left({G}\right), \mathcal{Y}\right) -\mathcal{L}_{c}\left({G}, \mathcal{Y}\right) \\ 
\leq \; &  -\epsilon \rho\left({G}\right)\left\|\nabla_{{G}} \mathcal{L}_{c}\left({G}, \mathcal{Y}\right)\right\|_{2}\left\|\nabla_{{G}} \mathcal{L}_{s}\left({G}\right)\right\|_{2} + \frac{\epsilon^2}{2} \|\nabla_{G}\mathcal{L}_{s}({G})\|^2_2 \|{\bf H}({G}-\epsilon \theta \nabla_{{G}} \mathcal{L}_{s}({G}), \mathcal{Y})\|_2 . 
\end{aligned} 
\end{equation}
Given $\|{\bf H}(G, \mathcal{Y})\|_2\leq M$, we know
\begin{equation}
\begin{aligned}
& \mathcal{L}_{c}\left({G}-\epsilon \nabla_{{G}} \mathcal{L}_{s}\left({G}\right), \mathcal{Y}\right) -\mathcal{L}_{c}\left({G}, \mathcal{Y}\right) \\ 
\leq \; &  -\epsilon \rho\left({G}\right)\left\|\nabla_{{G}} \mathcal{L}_{c}\left({G}, \mathcal{Y}\right)\right\|_{2}\left\|\nabla_{{G}} \mathcal{L}_{s}\left({G}\right)\right\|_{2} + \frac{\epsilon^2 M}{2} \|\nabla_{G}\mathcal{L}_{s}({G})\|^2_2 .
\end{aligned}
\end{equation}
By setting $\epsilon=\frac{2\rho\left({G}\right)\left\|\nabla_{{G}} \mathcal{L}_{c}\left({G}, \mathcal{Y}\right)\right\|_{2}}{M\left\|\nabla_{{G}} \mathcal{L}_{s}\left({G}\right)\right\|_{2}}$, we have 
\begin{equation}
\begin{aligned}
& \mathcal{L}_{c}\left({G}-\epsilon \nabla_{{G}} \mathcal{L}_{s}\left({G}\right), \mathcal{Y}\right) -\mathcal{L}_{c}\left({G}, \mathcal{Y}\right) = 0.
\end{aligned}
\end{equation}
Therefore, when $\epsilon<\frac{2\rho\left({G}\right)\left\|\nabla_{{G}} \mathcal{L}_{c}\left({G}, \mathcal{Y}\right)\right\|_{2}}{M\left\|\nabla_{{G}} \mathcal{L}_{s}\left({G}\right)\right\|_{2}}$ and $\rho(G)>0$, we have 
\begin{equation}
\mathcal{L}_{c}\left({G}-\epsilon \nabla_{{G}} \mathcal{L}_{s}\left({G}\right), \mathcal{Y}\right) < \mathcal{L}_{c}\left({G}, \mathcal{Y}\right).
\end{equation}
\end{proof}

\subsection{Thereom 2}
\label{app:thereom2}
\begin{theorem}
Assume that the augmentation function $\mathcal{A}(\cdot)$ generates a data view of the same class for the test nodes and the node classes are balanced. Assume for each class, the mean of the representations obtained from ${Z}$ and $\hat{Z}$ are the same. Minimizing the first term in Eq.~(\ref{eq:contrast}) is approximately minimizing the class-conditional entropy $H({ Z}|{Y})$ between features $Z$ and labels $Y$.
\end{theorem}
\begin{proof}
For convenience, we slightly abuse the notations to replace $\frac{{\bf z}_i}{\|{\bf z}_i\|}$ and $\frac{\hat{\bf z}_i}{\|\hat{\bf z}_i\|}$ with ${\bf z}_i$ and $\hat{\bf z}_i$, respectively. Then we have $\|{\bf z}_i\|=\|\hat{\bf z}_i\|=1$. Let $Z_k$ denote the set of test samples from class $k$; thus $|Z_k|=\frac{N}{K}$. Let $\stackrel{c}{=}$ denote equality up to a multiplicative and/or additive constant. Then the first term in Eq.~(\ref{eq:contrast}) can be rewritten as:
\begin{align}
& \sum^N_{i=1} (1-\hat{\bf z}_i^{\top} {\bf z}_i ) = \sum^N_{i=1} \left(1 -\hat{\bf z}^\top_i {\bf z}_i\right) 
\stackrel{c}{=}  \sum^K_{k=1} \frac{1}{|Z_k|}\sum_{{\bf z}_i\in Z_k}\left(-\hat{\bf z}^\top_i {\bf z}_i\right) 
 \label{eq:ours} 
\end{align}
Let ${\bf c}_k$ be the mean of hidden representations from class $k$; then we have ${\bf c}_k=\frac{1}{|Z_k|}\sum_{{\bf z}_i\in Z_k}{\bf z}_i=\frac{1}{|Z_k|}\sum_{\hat{\bf z}_i\in \hat{Z}_k}\hat{\bf z}_i$. Now we build the connection between Eq.~(\ref{eq:ours}) and $\sum_{i=1}^K\sum_{{\bf z}_i\in Z_k}\|{\bf z}_i-{\bf c}_k\|^2$:
\allowdisplaybreaks
\begin{align}
& \sum_{i=1}^K\sum_{{\bf z}_i\in Z_k} \|{\bf z}_i-{\bf c}_k\|^2  \label{eq:CondCE} \\
= & \sum_{i=1}^K\left(\sum_{{\bf z}_i\in Z_k} \|{\bf z}_i\|^2 - 2 \sum_{{\bf z}_i\in Z_k} {\bf z}_i^\top {\bf c}_k + |Z_k|\|{\bf c}_k\|^2 \right) \nonumber \\ 
= & \sum_{i=1}^K\left(\sum_{{\bf z}_i\in Z_k} \|{\bf z}_i\|^2 - 2  \frac{1}{|Z_k|}\sum_{{\bf z}_i\in Z_k}\sum_{\hat{\bf z}_i\in \hat{Z}_k}\hat{\bf z}^\top_i {\bf z}_i +  \frac{1}{|Z_k|}\sum_{{\bf z}_i\in Z_k}\sum_{\hat{\bf z}_i\in \hat{Z}_k}\hat{\bf z}^\top_i {\bf z}_i \right) \nonumber \\ \nonumber
= & \sum_{i=1}^K\left(\sum_{{\bf z}_i\in Z_k} \|{\bf z}_i\|^2 - \frac{1}{|Z_k|}\sum_{{\bf z}_i\in Z_k}\sum_{\hat{\bf z}_i\in Z_k}\hat{\bf z}^\top_i {\bf z}_i\right) \\ \nonumber
\stackrel{c}{=} & \sum_{i=1}^K\left(\frac{1}{|Z_k|}\sum_{{\bf z}_i\in Z_k} \sum_{\hat{\bf z}_i\in \hat{Z}_k} \|{\bf z}_i\|^2 - \frac{1}{|Z_k|}\sum_{{\bf z}_i\in Z_k}\sum_{\hat{\bf z}_i\in Z_k}\hat{\bf z}^\top_i {\bf z}_i \right)\\ \nonumber
= & \sum_{i=1}^K\left(\frac{1}{|Z_k|}\sum_{{\bf z}_i\in Z_k} \sum_{\hat{\bf z}_i\in \hat{Z}_k}\left( \|{\bf z}_i\|^2 - \hat{\bf z}^\top_i {\bf z}_i \right) \right)\\ 
\stackrel{c}{=}  & \sum_{i=1}^K\left( \frac{1}{|Z_k|}\sum_{{\bf z}_i\in Z_k} \sum_{\hat{\bf z}_i\in \hat{Z}_k}\left( - \hat{\bf z}^\top_i {\bf z}_i \right) \right) \label{eq:tightness}
\end{align}

By comparing Eq.~(\ref{eq:ours}) and Eq.~(\ref{eq:tightness}), the only difference is that Eq.~(\ref{eq:tightness}) includes more positive pairs for loss calculation. Hence, minimizing Eq.~(\ref{eq:ours}) can be viewed as approximately minimizing Eq.~(\ref{eq:tightness}) or Eq.~(\ref{eq:CondCE}) through sampling positive pairs.
As demonstrated in the work~\citep{boudiaf2020unifying}, Eq.~(\ref{eq:CondCE}) can be interpreted as a conditional
cross-entropy between $Z$ and another random variable $\bar{Z}$, whose conditional distribution given $Y$ is a standard Gaussian centered around ${\bf c}_Y: Z \mid Y \sim \mathcal{N}\left({\bf c}_Y, {\bf I}\right)$:
\begin{equation}
\sum_{{\bf z}_i\in Z_k} \|{\bf z}_i-{\bf c}_k\|^2 =\mathcal{H}(Z \mid Y)+\mathcal{D}_{K L}(Z|| \bar{Z} \mid Y)  \geq \mathcal{H}(Z \mid Y)
\end{equation}
Hence, minimizing the first term in Eq.~(\ref{eq:contrast}) is approximately minimizing $H(Z|Y)$.
\end{proof}

{\textbf{Discussion:} We note that the assumption “the mean of the representations obtained from ${Z}$ and $\hat{Z}$ are the same” can be inferred by the first assumption about data augmentation. Let $p_k(X)$ denote the distribution of samples with class $k$ and let $x\sim p_k(X)$ denote the sample with class $k$. Recall that we assume the data augmentation function $\mathcal{A}(\cdot)$ is strong enough to generate a data view that can simulate the test data from the same class. In this regard, the new data view can be regarded as an independent sample from the same class, i.e., $\mathcal{A}(x)\sim p_k(X)$. Hence, the expectation of $Z$ and  $\hat{Z}$ is the same and  we would approximately have that “the mean of ${Z}$ and $\hat{Z}$ is the same for each class”.  Particularly, when the number of samples is relatively large, the mean of $Z$ ($\hat{Z}$) would be close to the true distribution mean. For example, on one graph of Cora, the mean absolute difference between the two mean representations of $Z$ and $\hat{Z}$ are [0.018, 0.009, 0.021, 0.024, 0.016, 0.014, 0.0, 0.016, 0.023] for each class, which are actually very small.}

\subsection{A Figurative Example}
\label{app:example}
In Figure~\ref{fig:example}, we show an example of adversarial attack which causes the aggregated features for two nodes to be the same. Given two nodes ${\bf x}_1$ and ${\bf x}_2$ and their connections, we are interested in predicting their labels. Assume a mean aggregator is used for aggregating features from the neighbors. Before attack, the aggregated features for them are  $\bar{\bf x}_1=[0.45]$  and $\bar{\bf x}_2=[0.53]$ while after attack the aggregated features become the same $\bar{\bf x}_1=\bar{\bf x}_2=[0.45]$. In this context, it is impossible to learn a classifier that can distinguish the two nodes.

\begin{figure}[h]
    \centering
\includegraphics[width=0.7\linewidth]{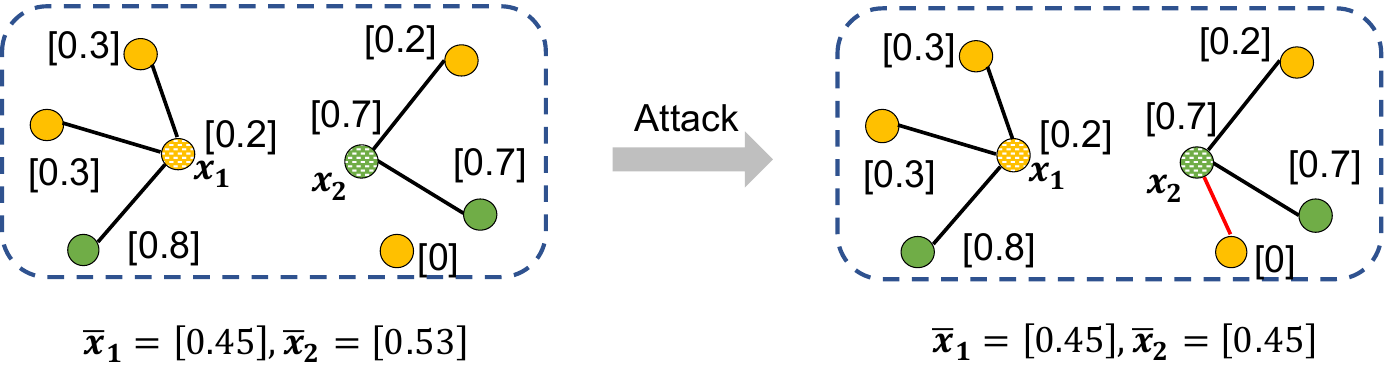}
    \vspace{-1em}
    \caption{Given two nodes ${\bf x}_1$ and ${\bf x}_2$ and their connections, we are interested in predicting their labels. The color indicates the node label and ``[0.3]" suggests that the associated node feature is 0.3.  Assume a mean aggregator is used for aggregating features from the neighbors. \textbf{Left:} we show the clean graph without adversarial attack. The aggregated features for the two center nodes are $\bar{\bf x}_1=[0.45]$  and $\bar{\bf x}_2=[0.53]$. \textbf{Right:} we show the attacked graph where the red edge indicates the adversarial edge injected by the attacker. The aggregated features for the two center nodes become $\bar{\bf x}_1=[0.45]$  and $\bar{\bf x}_2=[0.45]$.}
    \label{fig:example}
\end{figure}

\section{Algorithm}
\label{appendix:algorithm}
We show the detailed algorithm of \method in Algorithm~\ref{alg}. 
In detail, we first initialize $\Delta_{\bf A}$ and $\Delta_{\bf{\bf X}'}$ as zero matrices and calculate $\mathcal{L}_s$ based on Eq.~(\ref{eq:contrast}). Since we alternatively optimize $\Delta_{\bf A}$ and $\Delta_{\bf{\bf X}'}$, we update $\Delta_{\bf{\bf X}'}$ every $\tau_1$ epochs and  update $\Delta_{\bf A}$ every $\tau_2$ epochs. When the optimization is done, we sample the discrete graph structure for $K$ times and select the one that results in the smallest $\mathcal{L}_s$ as the final adjacency matrix.

\begin{algorithm}[h!]
\SetAlgoVlined
\small
\textbf{Input:} Pre-trained model $f_\theta$ and test graph dataset
$G_\text{Te}=({\bf A}, {{\bf X}'})$.  \\
\textbf{Output:} Model prediction $\hat{\bf Y}$ and transformed graph $G'=({\bf A}', {\bf{\bf X}'}')$. \\
Initialize $\Delta_{\bf A}$ and $\Delta_{\bf{\bf X}}$ as zero matrices \\
\For{$t=0,\ldots,T-1$}{
{
Compute ${\bf A'} = {\bf A} \oplus \Delta_{\bf A}$ and ${\bf{\bf X}'} = {\bf{\bf X}} + \Delta_{\bf{\bf X}}$  \\
Compute $\mathcal{L}_s(\Delta_{\bf A}, \Delta_{\bf{\bf X}})$ as shown in Eq.~(\ref{eq:contrast}) \\
} 
 \If{$t \% (\tau_1+\tau_2) < \tau_1$}{
Update $\Delta_{\bf{\bf X}} \leftarrow \Delta_{\bf{\bf X}}-\eta_1 \nabla_{\Delta_{\bf{\bf X}}} \mathcal{L}_s$\\
  }
 \Else{
Update $\Delta_\mathbf{A}\leftarrow{} \Pi_{\mathcal{P}}\left(\Delta_\mathbf{A}-\eta \nabla_{\Delta_{\bf A}}\mathcal{L}_s \right)$\\
 }
 }
 $\ell_\text{best}=\infty$ \quad \# store the best  loss \\
\For{$k=0,\ldots,K-1$}{
 {Sample ${\bf A}'_0 \sim \operatorname{Bernoulli}({\bf A}\oplus\Delta_{\bf A})$} \\
 Calculate $\mathcal{L}_s$ with ${\bf A}'_0$ as input \\
  \If{$\mathcal{L}_s < \ell_\text{best}$}{
   $\ell_\text{best} = \mathcal{L}_s$  \\
   ${\bf A}'={\bf A}'_0$
  }
 }
   ${{\bf X}}'= {{\bf X}} + \Delta_{{\bf X}}$  \\
 $\hat{\bf Y}=f_\theta({\bf A'}, {{\bf X}'})$\\
 \textbf{Return:} $\hat{\bf Y}$, $({\bf A'}, {{\bf X}'})$ \\
\caption{\method for Test-Time Graph Transformation}
\label{alg}
\end{algorithm}

\section{Datasets and Hyper-Parameters}
\label{appendix:data}
In this section, we reveal the details of reproducing the results in the experiments.
We will release the source code upon acceptance.

\subsection{Out-of-Distribution (OOD) Setting}
The out-of-distribution (OOD) problem indicates that the model does not generalize well to the test data due to the distribution gap between training data and test data~\citep{yang2021generalized}, which is also referred to as distribution shifts. Numerous research studies have been conducted to explore this problem and propose potential solutions~\citep{ganin2016domain,zhu2021shift,yang2021free,yang2021bridging,wu2022handling,liu2022confidence,chen2022invariance,buffelli2022sizeshiftreg,gui2022good,wu2022discovering,you2023graph}. In the following, we introduce the datasets used for evaluating the methods that tackle the OOD issue in graph domain.

\textbf{Dataset Statistics.} For the evaluation on OOD data, we use the datasets provided by~\citet{wu2022handling}. The dataset statistics are shown in Table~\ref{tab:data1}, which includes three distinct type of distribution shifts: (1) artificial transformation which indicates the node features are replaced by synthetic spurious features; (2) cross-domain transfers which means that graphs in the dataset are from different domains and (3) temporal evolution where the dataset is a dynamic one with evolving nature. Notably, we use the datasets provided by~\citet{wu2022handling}, which were adopted from the aforementioned references with manually created distribution shifts.  Note that there can be multiple training/validaiton/test graphs. Specifically, \cora{} and \amzphoto{} have 1/1/8 graphs for training/validation/test sets. Similarly, the splits are 1/1/5 on \twitch{}, 3/2/3 on \fb, 5/5/33 on \elliptic, and 1/1/3 on \arxiv.  

\begin{table}[h]
\caption{Summary of the experimental datasets that entail diverse distribution shifts.}
\label{tab:data1}
\scriptsize
\centering
\setlength{\tabcolsep}{2pt}
\begin{tabular}{@{}llcccccc@{}}
\toprule
Distribution Shift                         & Dataset         & \#Nodes        & \#Edges               & \#Classes & Train/Val/Test Split & Metric     & Adapted From                 \\ \midrule
\multirow{2}{*}{Artificial Transformation} & Cora            & 2,703         & 5,278                & 10       & Domain-Level         & Accuracy   & \citet{yang2016revisiting}         \\
                                           & Amz-Photo    & 7,650         & 119,081              & 10       & Domain-Level         & Accuracy   & \citet{shchur2018pitfalls}       \\ \midrule
\multirow{2}{*}{Cross-Domain Transfers}    & Twitch-E & 1,912   9,498 & 31,299   - 153,138   & 2        & Domain-Level         & ROC-AUC    & \citet{rozemberczki2021multi}  \\
                                           & FB100    & 769   41,536  & 16,656   - 1,590,655 & 2        & Domain-Level         & Accuracy   &  \citet{traud2012social}       \\ \midrule
\multirow{2}{*}{Temporal Evolution}        & Elliptic       & 203,769       & 234,355              & 2        & Time-Aware           & F1   Score &  \citet{pareja2020evolvegcn}       \\
                                           & OGB-Arxiv       & 169,343       & 1,166,243            & 40       & Time-Aware           & Accuracy   & \citet{hu2020open}           \\ \bottomrule
\end{tabular}
\end{table}

\label{app:hyper}
\textbf{Hyper-Parameter Setting.} For the setup of backbone GNNs, we majorly followed~\cite{wu2022handling}:
\begin{compactenum}[(a)]
\item GCN: the architecture setup is 5 layers with 32 hidden units for \elliptic{} and \arxiv{}, and 2 layers with 32 hidden units for other datasets, and with batch normalization for all datasets. The learning rate is set to 0.001 for \cora{} and \photo{}, 0.01 for other datasets; the weight decay is set to 0 for \elliptic{} and \arxiv{}, and 0.001 for other datasets. 
\item GraphSAGE: the architecture setup is 5 layers with 32 hidden units for \elliptic{} and \arxiv{}, and 2 layers with 32 hidden units for other datasets,  and with batch normalization for all datasets. The learning rate is set to 0.01 for all datasets; the weight decay is set to 0 for \elliptic{} and \arxiv{}, and 0.001 for other datasets. 
\item GAT: the architecture setup is 5 layers for \elliptic{} and \arxiv{}, and 2 layers for other datasets, and with batch normalization for all datasets. Each layer contains 4 attention heads and each head is associated with 32 hidden units. The learning rate is set to 0.01 for all datasets; the weight decay is set to 0 for \elliptic{} and \arxiv{}, and 0.001 for other datasets.
\item GPR: We use 10 propagation layers and 2 transformation layers with 32 hidden units. The learning rate is set to 0.01 for all datasets; the weight decay is set to 0 for \elliptic{} and \arxiv{}, and 0.001 for other datasets. Note that GPR does not contain  batch normalization layers.
\end{compactenum}
For the baseline methods, we tuned their hyper-parameters based on the validation performance. For DropEdge, we search the drop ratio in the range of [0, 0.05, 0.1, 0.15, 0.2, 0.3, 0.5, 0.7]. For Tent, we search the learning rate in the range of [1e-2, 1e-3, 1e-4, 1e-5, 1e-6] and the running epochs in [1, 10, 20, 30]. For EERM, we followed the instruction provided by the original paper.
For $\method$, we alternatively optimize node features for $\tau_1=4$ epochs and optimize graph structure $\tau_2=1$ epoch. We adopt DropEdge as the augmentation function $\mathcal{A(\cdot)}$ and set the drop ratio to 0.5. We use Adam optimizer for both feature learning and structure learning. We further search the learning rate of feature adaptation $\eta_1$ in [5e-3, 1e-3, 1e-4, 1e-5, 1e-6], learning rate of structure adaptation $\eta_2$ in [0.5, 0.1, 0.01], the modification budget $B$ in [0.5\%, 1\%, 5\%] of the original edges, total epochs $T$ in [5, 10]. We note that the process of tuning hyper-parameters is quick due to the high efficiency of test-time adaptation as we demonstrated in Section~\ref{sec:efficiency}. 

\textbf{Evaluation Protocol.}  For ERM (standard training), we train all the GNN backbones using the common cross entropy loss.  For DropEdge, we drop a certain amount of edges at each training epoch. For EERM, it optimizes a bi-level problem to obtain a trained classifier. Note that the aforementioned three methods do not perform any test-time adaptation and their model parameters are fixed during test.
For the two test-time adaptation methods, Tent and \method, we first obtain the GNN backbones pre-trained from ERM and adapt the model parameters or graph data at test time, respectively. Furthermore, Tent  minimizes the entropy loss while  \method minimizes the contrastive surrogate loss.

{\textbf{Quantifying Distribution Shifts.} Following SR-GNN~\citep{zhu2021shift}, we adopt central moment discrepancy (CMD)~\citep{zellinger2017central} as the measurement to quantify the distribution shifts in different graphs. Specifically, given a pre-trained model, we obtain its hidden representation on the training graph and test graphs, denoted as $Z_\text{tr}$ and $Z_\text{te}$. Then we calculate their distance by the CMD metric, i.e., $\text{CMD}(Z_\text{tr}), Z_\text{te}$. We show the results in Table~\ref{tab:shifts} and we can observe certain distribution shifts as these values are not small.  Let's take the \arxiv dataset as an example, where we select papers published before 2011 for training, 2011-2014 for validation, and within 2014-2016/2016-2018/2018-2020 for test. In this context, the distribution shift is from the temporal change.  In Table~\ref{tab:cmd-arxiv}, we show the CMD values, ERM performance and \method performance.  From the table, we can find that (1) the CMD value on the validation graph is essentially smaller than those on test graphs; and (2) GCN performances on test graphs (with larger shifts) are lower than that on the validation graph. }

\begin{table}[h]
\caption{{CMD values on each individual graph based on the pre-trained GCN.}}
\label{tab:shifts}
\centering
\begin{tabular}{@{}lccccccccc@{}}
\toprule
GraphID      & $G_0$ & $G_1$ & $G_2$ & $G_3$ & $G_4$ & $G_5$ & $G_6$ & $G_7$ & $G_8$ \\ \midrule
\photo & 6.4   & 5.1   & 5.5   & 3.7   & 2.8   & 3.7   & 3.9   & 6.6   &    -   \\
\cora         & 5.4   & 4.2   & 4.8   & 6.3   & 5.5   & 4.8   & 4.6   & 5.4   &    -  \\
\elliptic     & 80.2  & 90.8  & 114.3 & 86.5  & 789.3 & 781.6 & 99.4  & 100.4 & 150.6 \\
\arxiv    & 14.7  & 20.6  & 10.4  &   -   &   -   &   -   &   -   &   -   &     -  \\
\fb        & 29.7  & 16.9  & 32.9  &   -   &   -   &   -   &   -   &   -   &   -   \\
\twitch     & 8.6   & 6.1   & 9.0     & 8.4   & 9.7   &   -   &   -   &   -   &   -    \\ \bottomrule
\end{tabular}
\end{table}

\begin{table}[h]
\centering
\caption{{CMD values and the performances of ERM and \method on \arxiv}}
\label{tab:cmd-arxiv}
\begin{tabular}{@{}lccccc@{}}
\toprule
Method & 2011-2014 (Val) & 2014-2016   & 2014-2016   & 2016-2018   & 2018-2020   \\ \midrule
CMD    & 2.5             & 14.7        & 14.7        & 20.6        & 10.4        \\ \midrule
ERM    & 45.32±0.50     & 41.29±1.13 & 41.29±1.13 & 38.69±1.33 & 35.78±1.81 \\
\method & 45.82±0.38     & 44.03±0.95 & 44.03±0.95 & 41.90±1.28 & 38.81±1.47 \\ \bottomrule
\end{tabular}
\end{table}

\subsection{Abnormal Features} \label{app:abnormal-setup}

\textbf{Dataset Statistics.} In these two settings, we choose the original version of popular benchmark datasets \cora, \citeseer, \pubmed and \arxiv. The statistics for these datasets are shown in Table~\ref{tab:data2}. Note that we only have one test graph, and the injection of abnormal features or adversarial attack happens after the training process of backbone model, which can be viewed as evasion attack. 

\begin{table}[h]
\caption{Dataset statistics for experiments on abnormal features and adversarial attack.}
\label{tab:data2}
\small
\begin{tabular}{@{}lccccccc@{}}
\toprule
Dataset   & Classes & Nodes  & Edges   & Features & Training Nodes & Validation Nodes & Test Nodes \\ \midrule
\cora{}      & 7       & 2708   & 5278    & 1433     & 20 per   class & 500              & 1000       \\
\citeseer{}  & 6       & 3327   & 4552    & 3703     & 20 per   class & 500              & 1000       \\
\pubmed{}    & 3       & 19717  & 44324   & 500      & 20 per   class & 500              & 1000       \\
\arxiv{} & 40      & 169343 & 1166243 & 128      & 54\%           & 18\%             & 28\%       \\ \bottomrule
\end{tabular}
\end{table}

\textbf{Hyper-Parameter Settings.}  We closely followed AirGNN~\citep{liu2021graph} to set up the hyper-parameters for the baselines:
\begin{compactenum}[(a)]
\item GCN: the architecture setup is 2 layers with 64 hidden units without batch normalization for \cora{}, \citeseer{} and \pubmed{}, and 3 layers with 256 hidden units with batch normalization for \arxiv{}. The learning rate is set to 0.01.
\item GAT: the architecture setup is 2 layers with 8 hidden units in each of the 8 heads without batch normalization for \cora{}, \citeseer{} and \pubmed{}, and 3 layers with 32 hidden units in each of the 8 heads with batch normalization for \arxiv{}. The learning rate is set to 0.005.
\item APPNP: the architecture setup is 2-layer transformation with 64 hidden units and 10-layer propagation without batch normalization for \cora{}, \citeseer{} and \pubmed{}; the architecture is set to 3-layer transformation with 256 hidden units and 10-layer propagation with batch normalization for \arxiv{}.  The learning rate is set to 0.01.
\item AirGNN: The architecture setup is the same as APPNP and the hyper-parameter $\lambda$ is set to 0.3 for \arxiv{} and 0.5 for other datasets.
\item AirGNN-t: The architecture setup is the same as AirGNN but we tune the hyper-parameter $\lambda$ in AirGNN based on performance on the combination of training and validation nodes at test stage. This is because the test graph has the same graph structure as the training graph; thus we can take advantage of the label information of training nodes (as well as validation nodes) to tune the hyper-parameters.  Specifically, we search $\lambda$ in the range of [0, 0.1, 0.2, 0.3, 0.4, 0.5, 0.6, 0.7, 0.8, 0.9] for each noise ratio.
\end{compactenum}

For the setup of $\method$, we alternatively optimize node features for $\tau_1=4$ epochs and optimize graph structure $\tau_2=1$ epoch.  We adopt DropEdge as the augmentation function $\mathcal{A(\cdot)}$ and set the drop ratio to 0.5. We use Adam optimizer for both feature learning and structure learning. We further search the learning rate of feature adaptation $\eta_1$ in [1, 1e-1, 1e-2], total epochs $T$ in [10, 20]. The modification budget $B$ to 5\% of the original edges and the learning rate of structure adaptation $\eta_2$ is set to 0.1. It is worth noting that \textit{we use a weighted combination of contrastive loss and training classification loss}, i.e., $\mathcal{L}_\text{train}+\lambda\mathcal{L}_s$,  instead of optimizing the contrastive loss alone. We adaopt this strategy because that the training graph and the test graph were the same graph before  the injection of abnormal features. Here the $\lambda$ is tuned in the range of [1e-2, 1e-3, 1e-4]. We study the effects of contrastive loss and training classification loss in Appendix~\ref{app:loss}.

\subsection{ Adversarial Attack}
\textbf{Dataset Statistics.} We used \arxiv{} for the adversarial attack experiments and the dataset statistics can be found in Table~\ref{tab:data2}. Again, we only have one test graph for this dataset.

\textbf{Hyper-Parameter Settings.} The setup of GCN and GAT is the same as that in the setting of abnormal features. For the defense methods including SimPGCN, RobustGCN and GCNJaccard, we use the DeepRobust~\citep{li2020deeprobust} library to implement them. For GCNJaccard, we tune its threshold hyper-parameter in the range of [0.1, 0.2, 0.3, 0.4, 0.5, 0.6, 0.7, 0.8, 0.9]. The hyper-parameter is also tuned based on the performance of training and validation nodes (same as Appendix~\ref{app:abnormal-setup}). Note that the popular defenses ProGNN~\citep{jin2020graph} and GCNSVD~\citep{entezari2020all} were not included because they throw OOM error due to the expensive eigen-decomposition operation. 

We use the official implementation of the scalable attack PR-BCD~\citep{geisler2021robustness} to attack the test graph. We note that when performing adversarial attacks, the setting is more like transductive setting where the training graph and test graph are the same. However, the test graph becomes different from the training graph after the attack. Since the training graph and test graph were originally the same graph, \textit{we use a weighted combination of contrastive loss and training classification loss}, i.e., $\mathcal{L}_\text{train}+\lambda\mathcal{L}_s$,  instead of optimizing the contrastive loss alone. For the setup of $\method$, we alternatively optimize node features for $\tau_1=1$ epoch and optimize graph structure $\tau_2=4$ epochs.  We fix the learning rate of feature adaptation $\eta_1$ to 1e-3, learning rate of structure adaptation $\eta_2$ to 0.1, $\lambda$ to 1, total epochs $T$ to 50 and modification budget $B$ to 30\% of the original edges. 

\subsection{Hardware and Software Configurations.}

We perform experiments on  NVIDIA Tesla V100 GPUs. The GPU memory and running time reported in Table~\ref{tab:efficiency} are measured on one single V100 GPU. Additionally, we use eight CPUs, with the model name as Intel(R) Xeon(R) Platinum 8260 CPU @ 2.40GHz. The operating system we use is CentOS Linux 7 (Core).

\section{More Experimental Results}


\subsection{Comparison to Graph Domain Adaptation}
\label{app:graphda}

{Our work is related to graph domain adaptation (GraphDA)~\citep{shen2020adversarial,wu2020unsupervised,zhu2021shift}, but they are also \textbf{highly different}. In Table~\ref{tab:graphda1}, we summarize the differences between GraphDA and \method{}. In detail, there are the following differences:}
\begin{compactenum}[(a)]
\item {\textbf{Data and losses}: GraphDA methods optimize the loss function based on both labeled source data (training data) and unlabeled target data (test data), while \method only requires target data during inference. 
Hence, GraphDA methods are infeasible when access to the source data is prohibited such as online service.}
\item {\textbf{Parameter}: To our best knowledge, existing GraphDA methods are model-centric approaches while \method is a data-centric approach. \method adapts the data instead of the model, which can be more useful in some settings as we showed in the Example of Section~\ref{sec:analysis}.}
\item {\textbf{Efficiency}: GraphDA is indeed a training-time adaptation and for each given test graph, it would require training the model on the source and target data. Thus, it is much less efficient than \method, especially when we have multiple test graphs (e.g., 33 test graphs for \elliptic{}). }

\end{compactenum}

\begin{table}[h]
\centering
\caption{{Comparison between GraphDA and \method{}. They differ by their data and losses.}}
\label{tab:graphda1}
\begin{tabular}{@{}lcccccc@{}}
\toprule
\textbf{Setting} & \textbf{Source }             & \textbf{Target}             & \textbf{Train Loss}                        & \textbf{Test Loss} & \textbf{Parameter}                        & \textbf{Efficiency} \\ \midrule
GraphDA & $G_\text{tr}$ & $G_\text{te}$ & $\mathcal{L}(G_\text{tr}, \mathcal{Y}_\text{tr})+\mathcal{L}(G_\text{tr}, G_\text{te})$ & -                              &  $f_\theta$  & Low       \\
\method{}  & -                       & $G_\text{te}$ &    -                               & $\mathcal{L}(G_\text{te})$ & $G_\text{te}$ & High      \\ \bottomrule
\end{tabular}
\end{table}

{To compare their empirical performance, we include two GraphDA methods (SR-GNN~\citep{zhu2021shift} and UDA-GCN~\citep{wu2020unsupervised}) and one general domain adaptation method (DANN~\citep{ganin2016domain}). SR-GNN regularizes the model’s performance on the source and target domains. Note that SR-GNN is originally developed under the transductive setting where the training graph and test graph are the same. To apply SR-GNN in our OOD setting, we assume the test graph is available during the training stage of SR-GNN, as typically done in domain adaptation methods. UDA-GCN is another work that tackles graph data domain adaptation, which exploits local and global information for different domains. In addition, we also include DANN, which adopts an adversarial domain classifier to promote the similarity of feature distributions between different domains. We followed the authors' suggestions in their paper to tune the hyper-parameters and the results are shown in Table~\ref{tab:graphda2}. On the one hand, we can observe that GraphDA methods generally improve the performance of GCN under distribution shift and SRGNN is the best performing baseline. On the other hand, \method performs the best on all datasets except \photo. On \photo, \method does not improve as much as SR-GNN, which indicates that  joint optimization over source and target is necessary for this dataset. 
However, recall that domain adaptation methods are less efficient due to the joint optimization on source and target: the adaptation time of SR-GNN on the 8 graphs of \photo is 83.5s while that of \method is 4.9s (plus pre-training time 10.1s). Overall, test-time graph transformation exhibits strong advantages of effectiveness and efficiency.}

\begin{table}[h]
\centering
\caption{{Performance comparison between \method and graph domain adaptation methods.}}
\label{tab:graphda2}
\begin{tabular}{@{}lllllll@{}}
\toprule
Method        & \photo{}           & \cora{}                & \elliptic{}            & \fb{}               & \arxiv           & \twitch{}            \\ \midrule
ERM     & 93.79±0.97          & 91.59±1.44          & 50.90±1.51          & 54.04±0.94          & 38.59±1.35          & 59.89±0.50          \\
UDA-GCN & 91.70±0.35          & 92.65±0.46          & 51.57±1.31          & 54.11±0.54          & 39.43±0.71          & 52.12±0.38          \\
DANN    & 94.08±0.21          & 92.89±0.64          & 53.00±0.97          & 51.53±1.47          & 36.60±1.26          & 60.13±0.53          \\
SRGNN   & \textbf{94.64±0.17} & 94.08±0.28          & 51.94±0.81          & 54.08±1.10          & 38.92±0.65          & 59.21±0.51          \\ \midrule
\method    & 94.13±0.77          & \textbf{94.66±0.63} & \textbf{55.88±3.10} & \textbf{54.32±0.60} & \textbf{41.59±1.20} & \textbf{60.42±0.86} \\ \bottomrule
\end{tabular}
\end{table}

\subsection{Comparison to Graph Structure Learning}
{Our work is also relevant to graph structure learning (GSL)~\citep{franceschi2019learning,jin2020graph,chen2020iterative,zhao2021data,rozemberczki2021pathfinder,halcrow2020grale,fatemi2021slaps} which learns the graph structure during the training time while not adapting the graph structure at test stage. Our proposed test-time graph transform is essentially different from these works as we do not modify the training data but only the test data. It can be of interest to adopt GSL method at test time by also adapting the test graph structure. However, most existing GSL methods optimize the cross entropy loss defined on the labels to update graph structure, thus not applicable in the absence of test labels. One exception is SLAPS~\citep{fatemi2021slaps} which utilizes a self-supervised loss together with the cross entropy loss to optimize the graph structure. 
However, the default setting in SLAPS is generating structure for raw data points (with no given graph structure). Hence, using SLAPS for our settings requires considerable changes. Furthermore, we highlight two more weaknesses of SLAPS compared to \method}.
\begin{compactenum}[(a)]
    \item {\textbf{Introducing additional parameters}. SLAPS  uses a denoising loss as self-supervision. In detail, it first injects noise into node features and trains a denoising autoencoder to denoise the noisy features. This introduces additional parameters from the denoising autoencoder and inevitably changes the model architecture.}
    \item {\textbf{Not learning features}. As other GSL methods, SLAPS does not learn node features. We argue that feature learning is highly important under the abnormal feature setting as shown in Table~\ref{tab:fs_abnormal}. For example, structure learning only improves GCN by 2\% on \arxiv while feature learning can improve GCN by 20\%. Thus, without the feature learning component, the performance will significantly drop when encountering noisy features. }
\end{compactenum}

{Since \method is highly versatile and we can use any self-supervised loss as the surrogate loss, we can simply replace the contrastive loss in Eq.~(\ref{eq:contrast}) with the denoising loss of SLAPS instead of paying considerable efforts in adjusting SLAPS. We refer to the loss used for denoising as SLAPS loss and adopt it for TTGT. Note that we first train the parameters of the DAE used for denoising while keeping the pre-trained model fixed. Then we fix both DAE and the pre-trained model and optimize the test graph for TTGT. The results are shown in Table~\ref{tab:slaps}. From the table, we can observe that the SLAPS loss (or feature denoising loss) does not work as well as the contrastive loss.} 

\begin{table}[h]
\caption{{Comparison between SLAPS loss and our contrastive loss.}}
\centering
\label{tab:slaps}
\begin{tabular}{@{}lcccccc@{}}
\toprule
        & \photo{}           & \cora{}                & \elliptic{}            & \fb{}               & \arxiv           & \twitch{}            \\ \midrule
None    & 93.79±0.97          & 91.59±1.44          & 50.90±1.51          & 54.04±0.94          & 38.59±1.35          & 59.89±0.50          \\
{SLAPS} & {93.97±1.04} & {91.41±1.23}  & 	{50.54±1.81} & 	{54.08±0.76} & 	{41.38±1.35} & 	{59.85±0.68}\\
$\mathcal{L}_s$ in Eq.~(\ref{eq:contrast})    & \textbf{94.13±0.77} & \textbf{94.66±0.63} & \textbf{55.88±3.10} & \textbf{54.32±0.60} & \textbf{41.59±1.20 }         & \textbf{60.42±0.86} \\ \bottomrule
\end{tabular}
\end{table}

\subsection{Comparison to AD-GCL}
{Next, we compare our method with a graph contrastive learning method with learnable augmentation AD-GCL~\citep{suresh2021adversarial}. Since AD-GCL is originally designed for graph classification as a pre-training strategy, the direct empirical comparison between AD-GCL and \method is not easy. However, due to the flexibility of \method, we can integrate AD-GCL into our TTGT framework, denoted as TTGT+AD-GCL.  We present the empirical results in Table~\ref{tab:adgcl}. We can observe that TTGT+AD-GCL generally performs worse than \method except on \photo, which indicates that \method is a stronger realization of TTGT. Furthermore, we highlight some key differences between it and \method. }
\begin{compactenum}[(a)]
\item {AD-GCL requires optimization of a min-max problem which involves parameters of graph structure and model. Thus, adopting it for TTGT would change the pre-trained model architecture.}
\item {AD-GCL only augments the graph structure while not learning the features. We argue that feature learning is highly important under the abnormal feature setting as shown in Table~\ref{tab:fs_abnormal}. For example, structure learning only improves GCN by 2\% on \arxiv while feature learning can improve GCN by 20\%. Thus, without the feature learning component, the performance will significantly drop when encountering noisy features. }
\item  {According to Eq.~(9) in the AD-GCL paper, it calculates the similarities between all samples within each mini-batch. When we increase the batch size, we would easily get the out-of-memory issue while a small mini-batch will slow down the learning process. As a consequence, TTGT+AD-GCL is less efficient than \method: the adaptation time of TTGT+AD-GCL on \arxiv is 12.7s while that of \method is 2.6s.}
\end{compactenum}

\begin{table}[h]
\caption{{Comparison between \method and AD-GCL under the TTGT framework.}}
\label{tab:adgcl}
\centering
\setlength{\tabcolsep}{4pt}
\begin{tabular}{@{}lcccccc@{}}
\toprule
        & \photo{}           & \cora{}                & \elliptic{}            & \fb{}               & \arxiv           & \twitch{}            \\ \midrule
ERM    & 93.79±0.97          & 91.59±1.44          & 50.90±1.51          & 54.04±0.94          & 38.59±1.35          & 59.89±0.50          \\
TTGT+AD-GCL   & \textbf{94.96±0.52} & 92.38±1.35          & 54.38±2.77          & 53.81±0.87          & 39.16±0.98          & 59.78±0.65          \\

$\method$     & {94.13±0.77} & \textbf{94.66±0.63} & \textbf{55.88±3.10} & \textbf{54.32±0.60} & \textbf{41.59±1.20 }         & \textbf{60.42±0.86} \\ \bottomrule
\end{tabular}
\end{table}

\subsection{Out-of-Distribution (OOD) Setting}
\label{app:ood}
To show the performance on individual test graphs, we choose GCN as the backbone model and include the box plot on all test graphs within each dataset in Figure~\ref{fig:allbox}. We observe that \method generally improves over each test graph within each dataset, which validates the effectiveness of test-time graph transformation.

\begin{figure}[h]%
\vskip -1em
     \centering
     \subfloat[\photo{}]{{\includegraphics[width=0.5\linewidth]{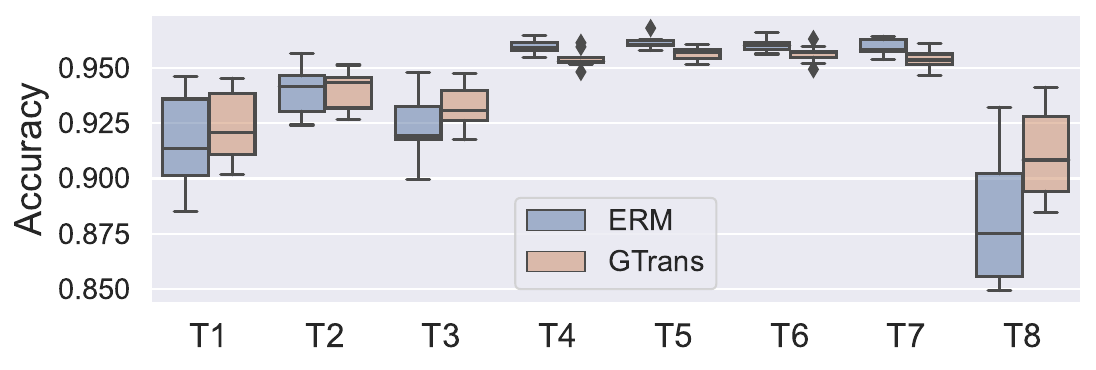} }}%
      \subfloat[\elliptic{}]{{\includegraphics[width=0.5\linewidth]{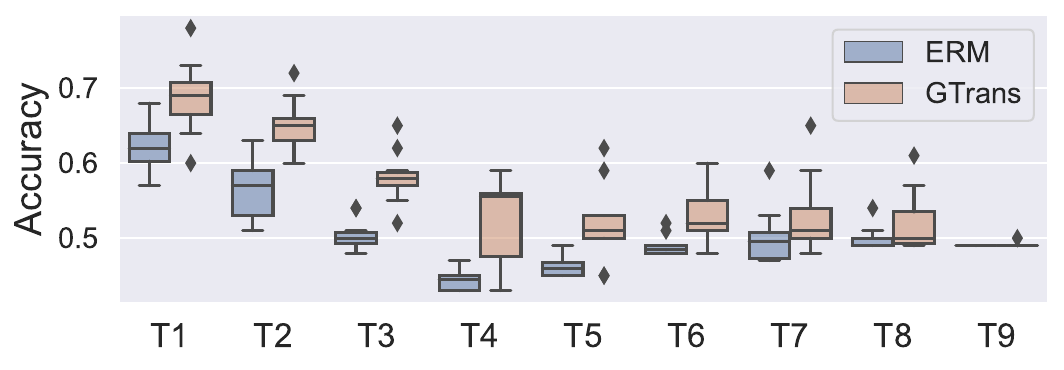} }}%
      \quad
\subfloat[\arxiv{}]{{\includegraphics[width=0.33\linewidth]{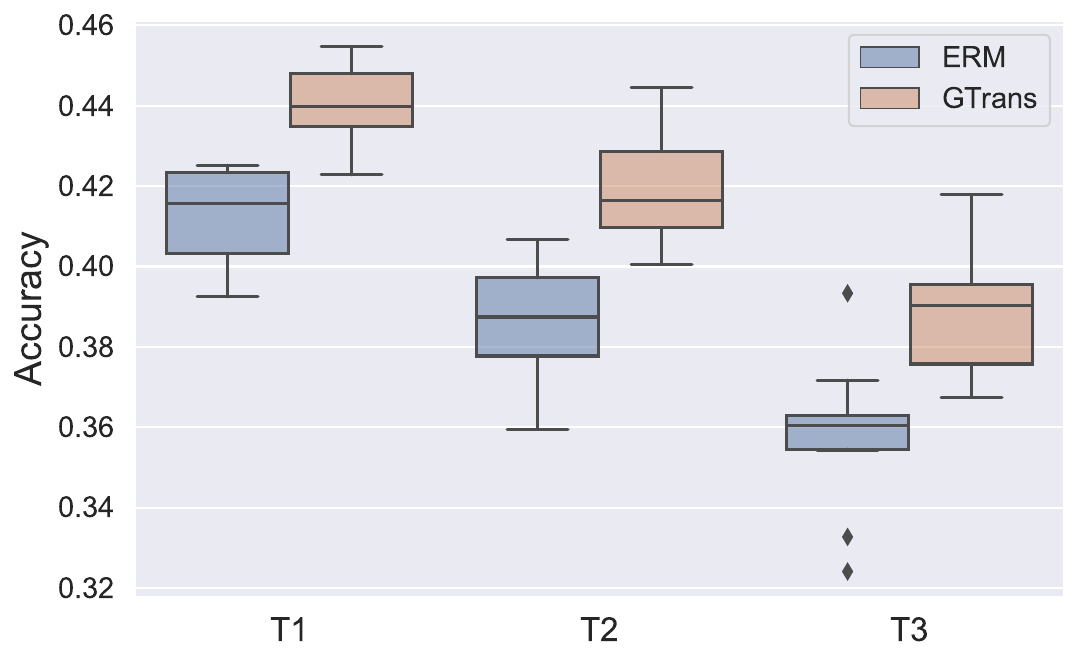} }}%
\subfloat[\fb]{{\includegraphics[width=0.33\linewidth]{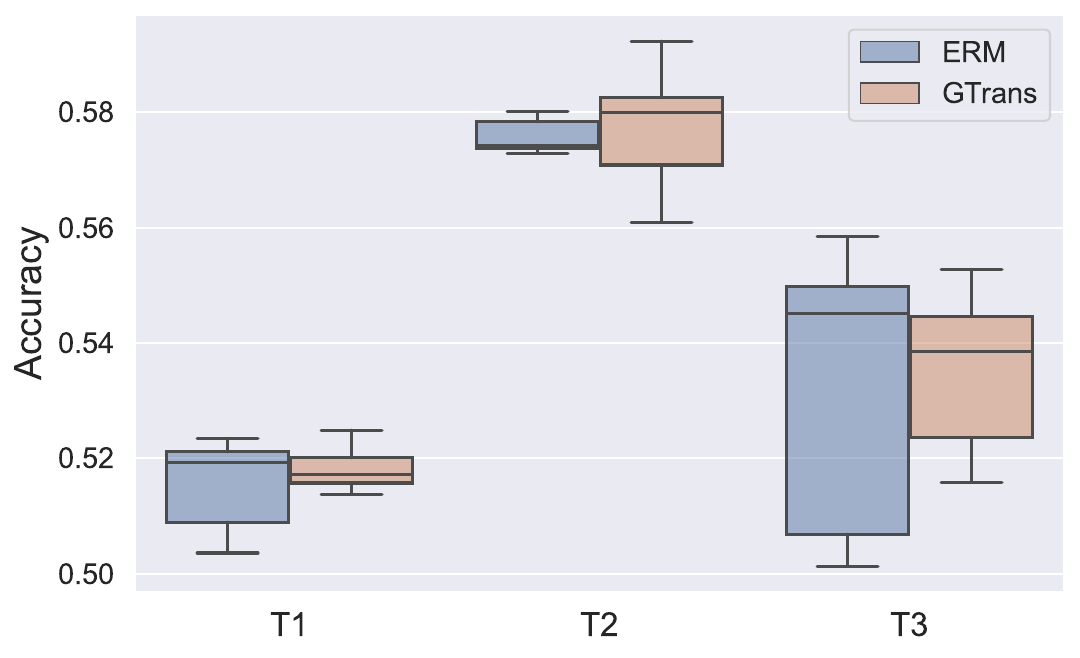} }}%
\subfloat[\twitch{}]{{\includegraphics[width=0.33\linewidth]{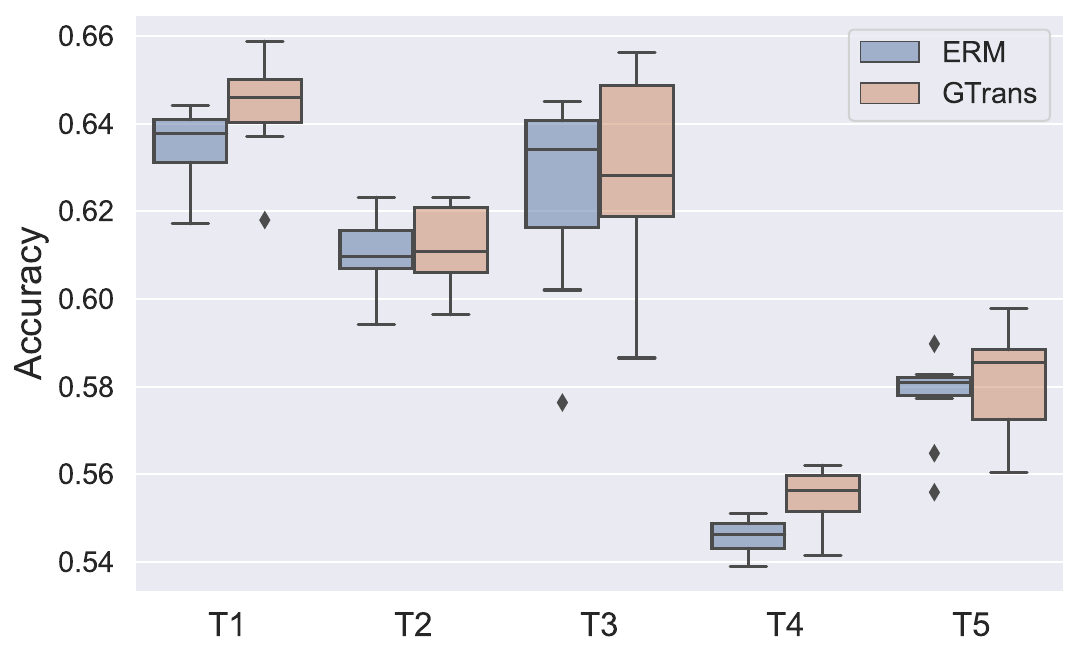} }}%
    \qquad
\caption{Classification performance on individual test graphs within each dataset for OOD setting.}
\label{fig:allbox}
\end{figure}

\subsection{Abnormal Features}\label{app:abnormal-more}
For each model, we present the node classification accuracy on all test nodes (i.e., both normal and abnormal ones) in Figure~\ref{fig:all}. \method{} significantly improves GCN in terms of the performance on all test nodes for all datasets across all noise ratios. For example, on \cora{} with 30\% noisy nodes, \method improves GCN by 31.0\% on overall test accuracy. These results further validate that the proposed \method{} can produce expressive and generalizable representations. 

\begin{figure}[h]%
     \centering
     \subfloat[\cora]{{\includegraphics[width=0.25\linewidth]{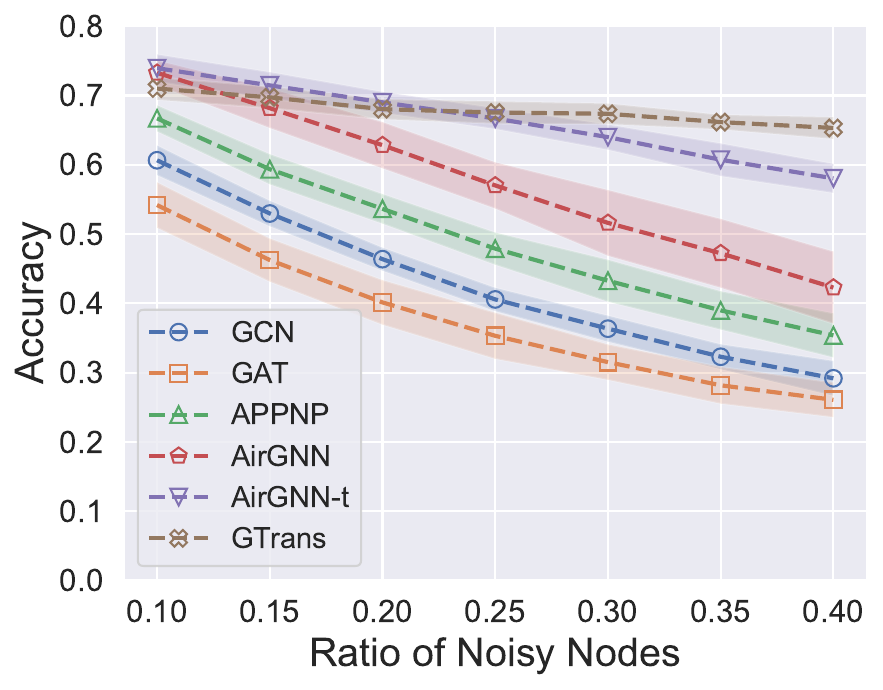} }}%
      \subfloat[\citeseer]{{\includegraphics[width=0.25\linewidth]{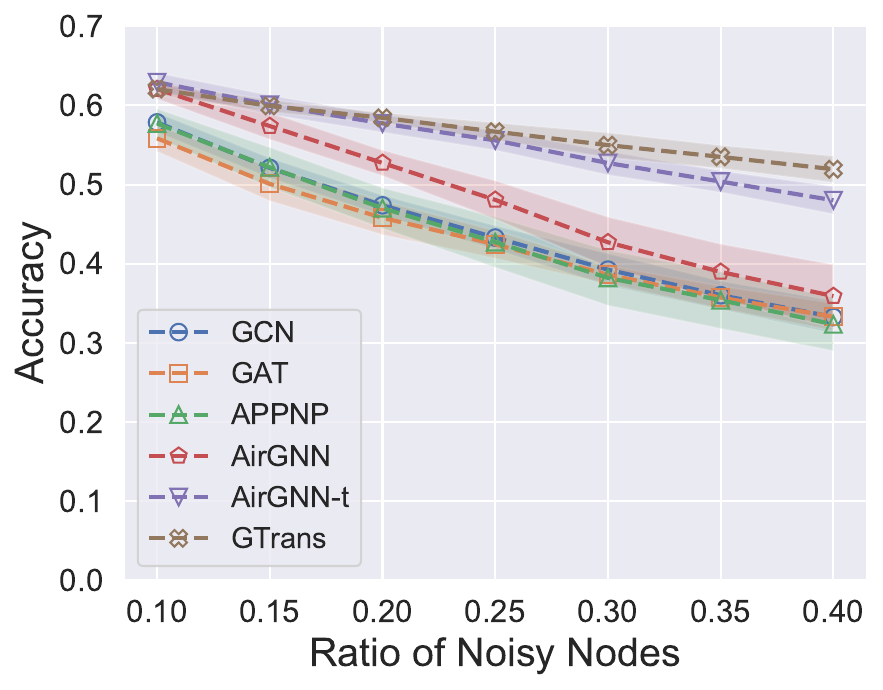} }}%
\subfloat[\pubmed]{{\includegraphics[width=0.25\linewidth]{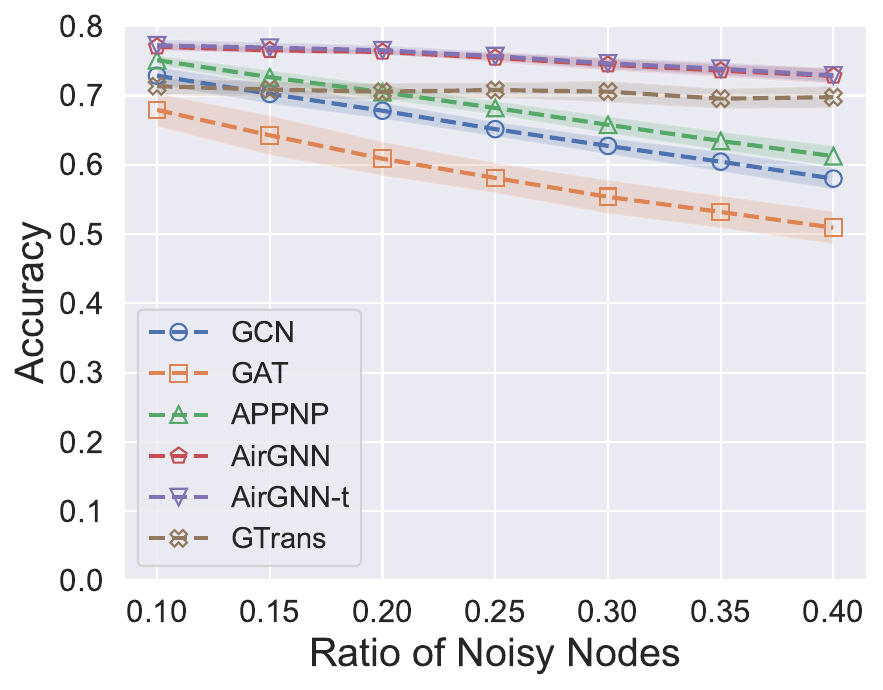} }}%
\subfloat[\arxiv]{{\includegraphics[width=0.25\linewidth]{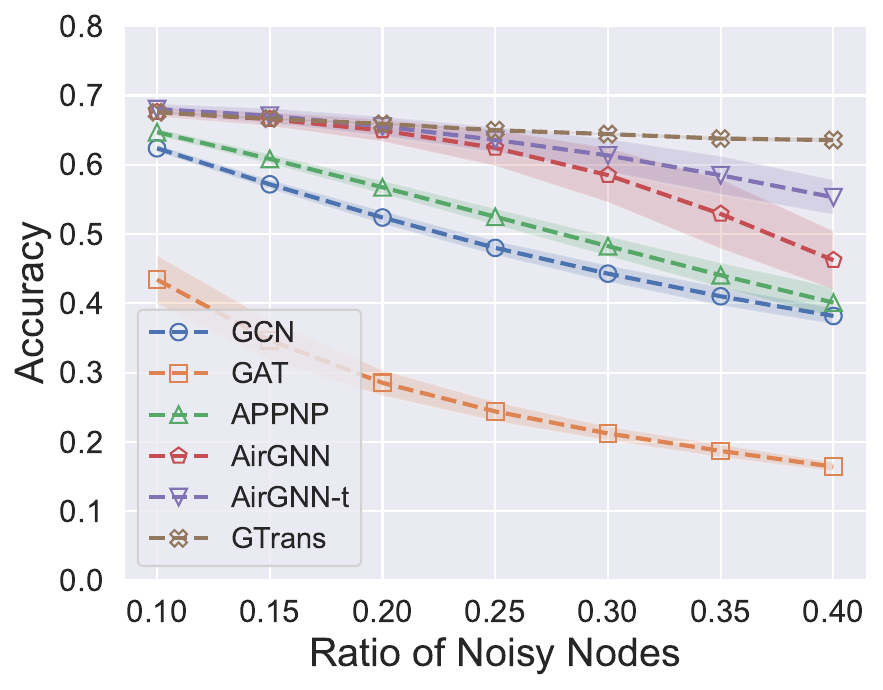} }}%
    \qquad
\vskip -0.7em
\caption{Overall node classification accuracy under the setting of abnormal features. \method significantly improves the performance of GCN on both abnormal nodes and overall nodes. }
\label{fig:all}
\end{figure}

\subsection{Interpretation on the Refined Graph for Adversarial Attack Setting} \label{app:interpretation}
To understand the modifications made on the graph, we compare several properties among clean graph, attacked graph (20\% perturbation rate), graph obtained by GCNJaccard, and graph obtained by \method{} in Table~\ref{tab:stats}. We follow the definition in~\citep{zhu2020beyond} to measure homophily; ``Pairwise Feature Similarity" is the averaged feature similarity among all pairs of connected nodes; ``\#Edge+/-'' indicates the number of edges that the modified graph adds/deletes compared to the clean graph.  From Table~\ref{tab:stats}, we observe that first, adversarial attack decreases homophily and feature similarity, but \method{} and GCNJaccard promote such information to defend against adversarial patterns. Second, both \method and GCNJaccard focus on deleting edges from the attacked graph, but GCNJaccard removes a substantially larger amount of edges, which may destroy clean graph structure and lead to suboptimal performance.

\begin{table}[h]
\centering
\caption{Statistics of modified graphs. \method promotes homophily and feature similarity.}
    \label{tab:stats}
    {
    \begin{tabular}{@{}lcccc@{}}
    \toprule
             & \method   & GCNJaccard & Attacked & Clean  \\ \midrule
Homophily    & 0.689  & 0.636      & 0.548    & 0.654  \\
Pairwise Feature Similarity        & 0.825  & 0.863      & 0.809    & 0.827  \\
\#Edges      & 1,945k & 1,754k     & 2,778k   & 2,316k \\
\#Edge+      & 108k   & 118k       & 463k     & -      \\
\#Edge-      & 479k   & 679k       & 0.6k     & -      \\
\bottomrule
    \end{tabular}}
\end{table}

\subsection{Ablation Study on Surrogate Loss}
\label{app:loss}
Since we optimized a combined loss in the settings of abnormal features and adversarial attack, we now perform ablation study to examine the effect of each component.  We choose GCN as the backbone model and choose 0.3 noise ratio for abnormal features. The results for abnormal features and adversarial attack are shown in Tables~\ref{tab:loss_abnormal} and~\ref{tab:loss_attack}, respectively. ``None'' indicates the vanilla GCN without any test-time adaptation and ``Combined'' indicates jointly optimizing a combination of the two losses. From the two tables, we can conclude that (1) both $\mathcal{L}_s$ and $\mathcal{L}_\text{train}$ help counteract abnormal features and adversarial attack; and (2) optimizing the combined loss generally outperforms optimizing $\mathcal{L}_s$ or $\mathcal{L}_\text{train}$ alone.

\begin{table}[h]
\small
\centering
\caption{Performance comparison when optimizing different losses for abnormal feature setting. Both $\mathcal{L}_s$ and $\mathcal{L}_\text{train}$ help counteract abnormal features; optimizing the combined loss generally outperforms optimizing $\mathcal{L}_s$ or $\mathcal{L}_\text{train}$ alone. }
\label{tab:loss_abnormal}
\setlength{\tabcolsep}{2.5pt}
\begin{tabular}{@{}lcccc|cccc@{}}
\toprule
         & \multicolumn{4}{c}{All Test Nodes}                           & \multicolumn{4}{c}{Abnormal Nodes}                      \\ 
Dataset  & None       & $\mathcal{L}_s$         & $\mathcal{L}_\text{train}$      & Combined       & None       & $\mathcal{L}_s$         & $\mathcal{L}_\text{train}$      & Combined       \\ \midrule
\arxiv{}    & 44.29±1.20 & 46.70±1.20 & 64.60±0.22 & 64.64±0.24 & 31.50±1.12 & 35.22±1.17 & 57.54±0.93 & 57.69±0.93 \\
\citeseer{} & 39.26±2.02 & 45.41±2.71 & 54.97±1.55 & 52.54±1.08 & 17.30±1.86 & 32.93±2.81 & 42.67±2.78 & 44.10±2.97 \\
\cora{}     & 36.35±1.87 & 48.71±3.02 & 66.77±2.54 & 67.29±1.44 & 15.80±2.33 & 35.40±4.05 & 61.67±3.64 & 63.90±2.55 \\
\pubmed{}   & 62.72±1.20 & 65.49±1.65 & 66.56±0.64 & 70.55±1.55 & 36.47±1.85 & 56.77±3.60 & 60.20±1.97 & 67.93±2.11 \\ \bottomrule
\end{tabular}
\end{table}

\begin{table}[h]
\caption{Performance comparison when optimizing different losses for adversarial attack setting. Both $\mathcal{L}_s$ and $\mathcal{L}_\text{train}$ help counteract adversarial attack; optimizing the combined loss generally outperforms optimizing $\mathcal{L}_s$ or $\mathcal{L}_\text{train}$ alone. $r$ denotes the perturbation rate.}
\label{tab:loss_attack}
\centering
\begin{tabular}{@{}lccccc@{}}
\toprule
Loss  & $r$=5\%     & $r$=10\%      & $r$=15\%      & $r$=20\%      & $r$=25\%     \\ \midrule
None  & 57.47±0.54 & 47.97±0.65 & 38.04±1.22 & 29.05±0.73 & 19.58±2.32 \\
$\mathcal{L}_s$    & 62.40±0.45 & 59.76±0.93 & 57.85±1.03 & 55.26±1.35 & 52.64±2.35 \\
${\mathcal{L}_\text{train}}$ & 65.54±0.25 & 64.00±0.31 & 62.99±0.34 & 61.95±0.40 & 61.55±0.58 \\
Combined  & 66.29±0.25 & 65.16±0.52 & 64.40±0.38 & 63.44±0.50 & 62.95±0.67 \\ \bottomrule
\end{tabular}
\end{table}

\subsection{Ablation Study on Feature Learning and Structure Learning} \label{app:fs}
In this subsection, we investigate the effects of the feature learning component and structure learning component. We show results for abnormal features and adversarial attack in Tables~\ref{tab:fs_abnormal} and~\ref{tab:fs_attack}, respectively. Note that ``None" indicates the vanilla GCN without any test-time adaptation; ``${\bf A}'$" or ``${\bf X}'$" is the variants of \method which solely learns structure or node features; ``Both'' indicates the method \method that  learn both structure and node features. From Table~\ref{tab:fs_abnormal}, we observe that (1) while both feature learning and structure learning can improve the vanilla performance, feature learning is more powerful than structure learning; (2) combining them does not seem to further improve the performance but it achieves a comparable performance to sole feature learning. From Table~\ref{tab:fs_attack}, we observe that (1) while both feature learning and structure learning can improve the vanilla performance, structure learning is more powerful than feature learning; and (2) combining them can further improve the performance. From these observations, we conclude that (1) feature learning is more crucial for counteracting feature corruption and structure learning is more important for defending structure corruption; and (2) combining them always yields a better or comparable performance.

\begin{table}[h]
\small
\centering
\caption{Ablation study on feature learning and structure learning for abnormal feature setting. While both feature learning and structure learning can improve the vanilla performance, feature learning is more powerful than structure learning. Combining them does not seem to further improve the performance but it achieves a comparable performance to sole feature learning.}
\label{tab:fs_abnormal}
\setlength{\tabcolsep}{2.5pt}
\begin{tabular}{@{}lcccc|cccc@{}}
\toprule
                             & \multicolumn{4}{c}{All Test Nodes}                           & \multicolumn{4}{c}{Abnormal Nodes}                      \\ 
Dataset & None       &${\bf A}'$      &${\bf X}'$          & Both       & None       & ${\bf A}'$         &${\bf X}'$          & Both       \\ \midrule
\arxiv{}                        & 44.29±1.20 & 46.02±1.09 & 64.88±0.23 & 64.64±0.24 & 31.50±1.12 & 31.96±1.05 & 58.12±0.83 & 57.69±0.93 \\
\citeseer{}                     & 39.26±2.02 & 39.67±1.96 & 54.99±1.55 & 54.97±1.55 & 17.30±1.86 & 17.13±1.81 & 42.73±2.81 & 42.67±2.78 \\
\cora{}                         & 36.35±1.87 & 37.02±1.82 & 67.40±1.62 & 67.29±1.44 & 15.80±2.33 & 15.67±2.15 & 64.17±3.18 & 63.90±2.55 \\
\pubmed{}                       & 62.72±1.20 & 62.50±1.21 & 70.53±1.52 & 70.55±1.55 & 36.47±1.85 & 36.57±1.96 & 67.90±2.07 & 67.93±2.11 \\ \bottomrule
\end{tabular}
\end{table}


\begin{table}[h]
\centering
\caption{Ablation study on feature learning and structure learning for adversarial structural attack setting. While both feature learning and structure learning can improve the vanilla performance, structure learning is more powerful than feature learning. Combining them can further improve the performance. }
\label{tab:fs_attack}
\begin{tabular}{@{}lccccc@{}}
\toprule
Param & $r$=5\%     & $r$=10\%      & $r$=15\%      & $r$=20\%      & $r$=25\%     \\ \midrule
None  & 57.47±0.54 & 47.97±0.65 & 38.04±1.22 & 29.05±0.73 & 19.58±2.32 \\
${\bf X}'$     & 64.16±0.24 & 61.59±0.29 & 60.07±0.32 & 59.04±0.49 & 58.82±0.68 \\
${\bf A}'$     & 65.93±0.32 & 64.31±0.71 & 63.14±0.39 & 61.42±0.58 & 60.18±1.53 \\
Both  & 66.29±0.25 & 65.16±0.52 & 64.40±0.38 & 63.44±0.50 & 62.95±0.67 \\ \bottomrule
\end{tabular}
\end{table}

\subsection{Comparing Different Self-Supervised Signals}
\label{sec:ssl}
As there can be other choices to guide our test-time graph transformation process, we examine the effects of other self-supervised signals. We choose the OOD setting to perform experiments and consider the following two parameter-free self-supervised loss:
\begin{compactenum}[(a)]
\item \textbf{Reconstruction Loss.} Data reconstruction is considered as a good self-supervised signal and we can adopt link reconstruction~\citep{kipf2016variational} as the guidance. Minimizing the reconstruction loss is equivalent to maximizing the similarity for connected nodes, which encourages the connected nodes to have similar representations.
\item \textbf{Entropy Loss.} Entropy loss calculates the entropy of the model prediction. Minimizing the entropy can force the model to be certain about the prediction. It has been demonstrated effective in Tent~\citep{wang2021tent} when adapting batch normalization parameters.
\item {\textbf{SLAPS Loss.} SLAPS~\citep{fatemi2021slaps} utilizes self-supervision to guide the graph structure learning process. Specifically, it injects random noise into node features and employs a denoising autoencoder (DAE) to denoise the node features. We refer to the loss used for denoising as SLAPS loss and adopt it for TTGT. Note that we first train the parameters of the DAE used for denoising while keeping the pre-trained model fixed. Then we fix both DAE and the pre-trained model and optimize the test graph for TTGT.}
\end{compactenum}
We summarize the results in Table~\ref{tab:ssl}. From the table, we observe that in most of the cases, the above three losses underperform our proposed surrogate loss and even degrade the vanilla performance. It validates the effectiveness of our contrastive loss in guiding the test-time graph transformation.

\begin{table}[h]
\caption{Comparison of different self-supervised signals for OOD setting. The reconstruction loss and entropy loss generally underperform our proposed loss.}
\centering
\label{tab:ssl}
\begin{tabular}{@{}lcccccc@{}}
\toprule
        & \photo{}           & \cora{}                & \elliptic{}            & \fb{}               & \arxiv           & \twitch{}            \\ \midrule
None    & 93.79±0.97          & 91.59±1.44          & 50.90±1.51          & 54.04±0.94          & 38.59±1.35          & 59.89±0.50          \\
Recon   & 93.77±1.01          & 91.37±1.41          & 49.33±1.37          & 53.94±1.03          & \textbf{44.93±4.06} & 59.17±0.77          \\
Entropy & 93.67±0.98          & 91.54±1.14          & 49.93±1.56          & 54.29±0.97          & 41.11±2.19          & 59.48±0.64          \\
{SLAPS} & {93.97±1.04} & {91.41±1.23}  & 	{50.54±1.81} & 	{54.08±0.76} & 	{41.38±1.35} & 	{59.85±0.68}\\
$\mathcal{L}_s$ in Eq.~(\ref{eq:contrast})    & \textbf{94.13±0.77} & \textbf{94.66±0.63} & \textbf{55.88±3.10} & \textbf{54.32±0.60} & 41.59±1.20          & \textbf{60.42±0.86} \\ \bottomrule
\end{tabular}
\end{table}

{\textbf{Gradient Correlation.} In Figure~\ref{fig:thm1}, we have empirically verified the effectiveness of Theorem~1 when adopting the surrogate loss in Eq.~(\ref{eq:contrast}) as $\mathcal{L}_s$. We further plot the values of $\rho(G)$ with different surrogate losses (i.e., entropy, reconstruction and SLAPS) and $\mathcal{L}_c$ on one test graph in \cora in Figure~\ref{fig:thm1-more}. We can observe that a positive $\rho(G)$ generally reduces the test classification loss. For example, when using entropy loss, the test loss generally reduces when $\rho(G)$ is positive and starts to increase after $\rho(G)$ becomes negative.}

\begin{figure}[h]%
\vskip -1em
     \centering
     \subfloat[Entropy Loss]{{\includegraphics[width=0.3\linewidth]{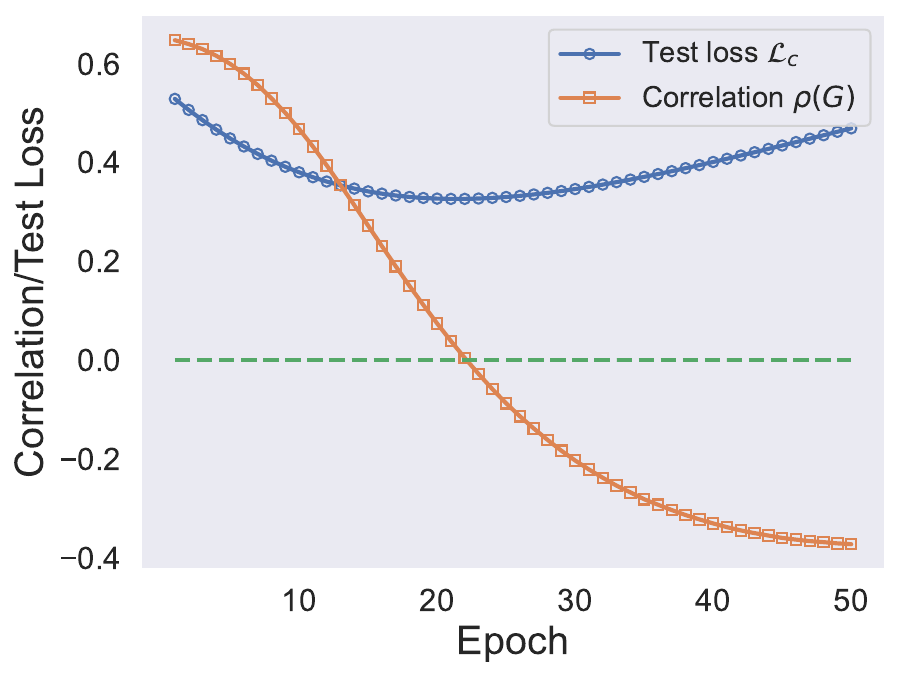} }}%
      \subfloat[Reconstruction Loss]{{\includegraphics[width=0.3\linewidth]{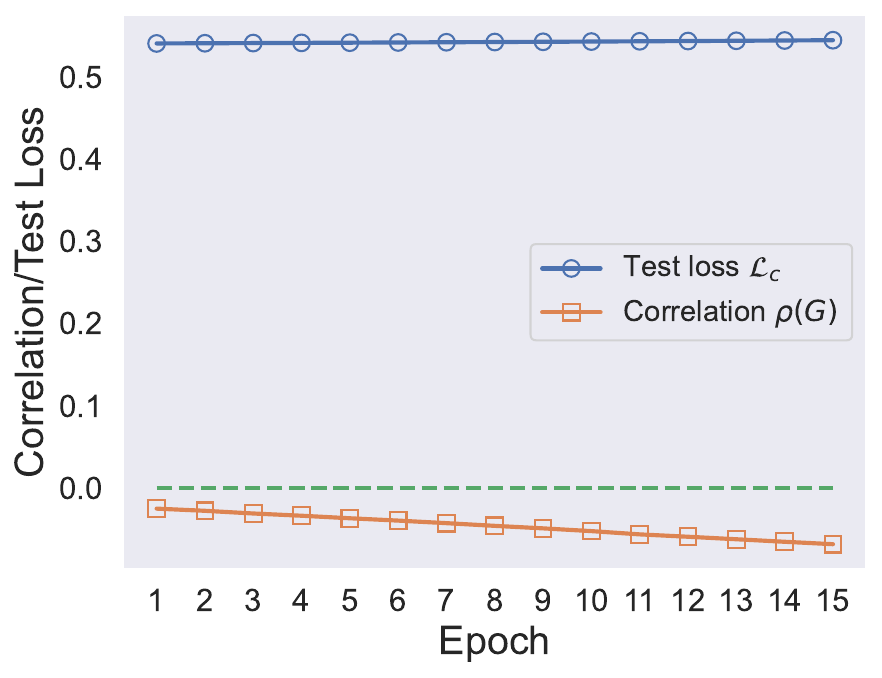} }}%
    \subfloat[SLAPS Loss]{{\includegraphics[width=0.3\linewidth]{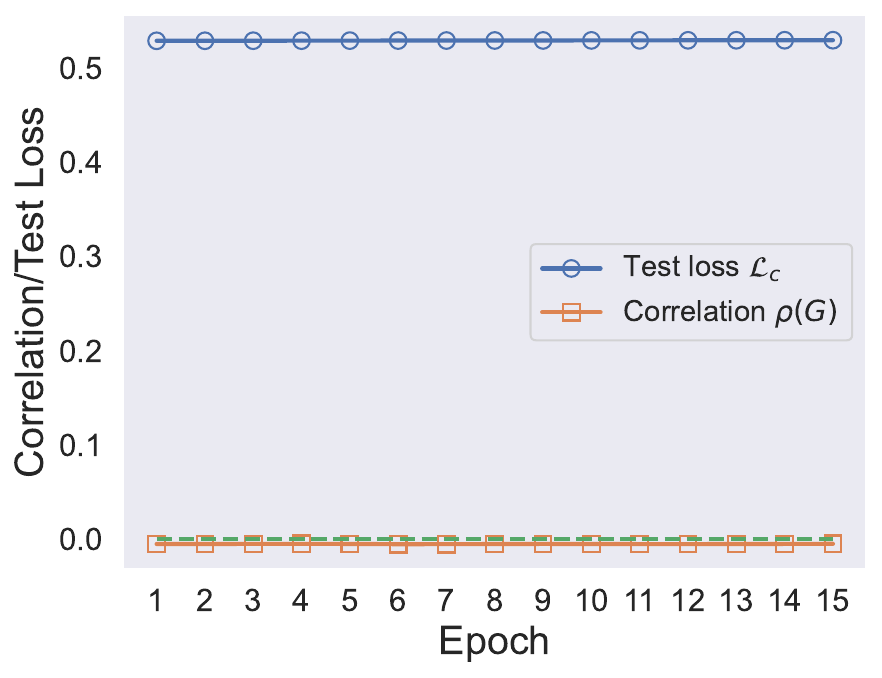} }}%
\caption{{The relationship between $\rho(G)$ and $\mathcal{L}_c$ when adopting different surrogate losses.}}
\label{fig:thm1-more}
\end{figure}

\subsection{Sensitivity to Hyper-Parameter $B$}
{In this subsection, we examine the sensitivity of \method{}' performance with respect to the perturbation budget, i.e., hyper-parameter $B$.
Specifically, we vary the value of $B$ in the range of $\{0.5\%, 1\%, 5\%, 10\%, 20\%, 30\%\}$ and perform experiments on the \arxiv dataset for the three settings in Table~\ref{tab:budget}. Specifically, ``Abn. Feat'' stands for abnormal feature setting with 30\% noisy feature while ``Adv. Attack'' stands for the adversarial attack setting with 20\% perturbation rate. From the table, we can observe budget $B$ has a smaller effect on OOD and abnormal feature settings while highly impacting the performance under structural adversarial attack. This is because most of the changes made by adversarial attack are edge injections as shown in Table~\ref{tab:stats}, and we need to use a large budget $B$ to remove  adversarial patterns. By contrast, \method is much less sensitive to the value of $B$ in the other two settings.}

\begin{table}[h]
\centering
\caption{{The change of model performance when varying budget $B$ on \arxiv.}}
\label{tab:budget}
\begin{tabular}{@{}lcccccc@{}}
\toprule
\textbf{Setting} & $B$=0.5\% & $B$=1\%   & $B$=5\%   & $B$=10\%  & $B$=20\%  & $B$=30\%  \\ \midrule
OOD              & 40.52 & 40.69 & 41.32 & 41.40 & 41.70 & 41.65 \\
Abn. Feat.       & 64.78 & 64.80 & 64.64 & 64.60 & 64.57 & 64.57 \\
Adv. Attack     & 56.66 & 56.89 & 58.30  & 59.93 & 62.31 & 63.47 \\ \bottomrule
\end{tabular}
\end{table}


\subsection{Different Augmentations Used in Contrastive Loss}
{In Eq.~(\ref{eq:contrast}), we used DropEdge as the augmentation function $\mathcal{A}(\cdot)$ to obtain the augmented view. In practice, the choice of augmentation can be flexible and here we explore two other choices: node dropping~\citep{you2020graph} and subgraph sampling~\citep{zhu2021graph}. We perform experiments on OOD setting with GCN as the backbone model and report the results in Table~\ref{tab:aug}. Specifically, we adopt a ratio of 0.05 for node dropping, and ratios of 0.05 and 0.5 for DropEdge. From the table, we can observe that (1) \method with any of the three augmentations can greatly improve the performance of GCN under distribution shift, and (2) different augmentations lead to slightly different performance on different datasets.}

\begin{table}[h]
\centering
\caption{{Performance of \method{} with different augmentation used in contrastive loss.}}
\setlength{\tabcolsep}{4pt}
\label{tab:aug}
\begin{tabular}{@{}lcccccc@{}}
\toprule
\textbf{Augmentation}     & \photo{}  & \cora       & \elliptic{}   & \fb      & \arxiv{}  & \twitch{}   \\ \midrule
Node Dropping     & \textbf{94.45±0.70} & {95.00±0.65} & {56.57±2.99} & 54.15±0.60 & 39.95±1.11 & 60.38±0.74 \\
Subgraph Sampling & 94.18±0.75 & 94.95±0.64 & 55.40±3.00 & \textbf{54.51±0.56} & 41.44±1.17 & \textbf{60.52±0.80} \\
DropEdge (0.05) & {94.43±0.68}          & \textbf{95.10±0.66 }         & \textbf{56.78±2.86}          & 54.17±0.60 & 40.19±1.08 & 60.31±0.74 \\
DropEdge (0.5)         & 94.13±0.77 & 94.66±0.63 & 55.88±3.10 & 54.32±0.60 & \textbf{41.59±1.20} & 60.42±0.86 \\  \midrule
ERM               & 93.79±0.97 & 91.59±1.44 & 50.90±1.51 & 54.04±0.94 & 38.59±1.35 & 59.89±0.50 \\ 
\bottomrule
\end{tabular}
\vspace{-1.5em}
\end{table}

\end{document}